\newtheorem{lemma}{Lemma}
\newtheorem{theorem}{Theorem}
\newtheorem{corollary}{Corollary}
\newtheorem{definition}{Definition}
\newcommand{\X}{\mathcal{X}}
\newcommand{\Y}{\mathcal{Y}}
\newcommand{\Xspl}{\mathbf{X}}
\newcommand{\Yspl}{\mathbf{Y}}
\newcommand{\V}{\mathcal{V}}
\newcommand{\B}{\mathcal{B}}
\newcommand{\ft}{\widetilde{f}}
\newcommand{\real}{\mathbb{R}}
\newcommand{\expectation}{\mathbb{E}}
\newcommand{\expec}{{\Expectation} \,}
\newcommand{\expecf}[1]{{\Expectation_{#1}} \,}
\newcommand{\norm}[1]{\left\|#1\right\|}
\newcommand{\abs}[1]{\left|#1\right|}
\newcommand{\paren}[1]{\left(#1\right)}
\newcommand{\pr}[1]{\mathbb{P}\left(#1\right)}
\newcommand{\prf}[2]{\mathbb{P}_{#1}\left(#2\right)}
\newcommand{\distc}[1]{\rho\left(#1\right)}
\newcommand{\ind}[1]{\mathds{1}\left\{#1\right\}}
\newcommand{\braces}[1]{\left\{#1\right\}}
\DeclareMathOperator*{\volume}{vol}
\DeclareMathOperator*{\Expectation}{\expectation}
\newcommand{\vol}[1]{\volume\left(#1\right)}
\title{$k$-NN Regression Adapts to Local Intrinsic Dimension}
\author{Samory Kpotufe \\
Max Planck Institute for Intelligent Systems\\
 %Tuebingen, Germany\\
\texttt{samory@tuebingen.mpg.de}}
\begin{document}

\maketitle

\begin{abstract}
Many nonparametric regressors were recently shown to converge at rates that depend only on the intrinsic dimension of data.
These regressors thus escape the curse of dimension when high-dimensional data has low intrinsic dimension (e.g. a manifold). 
We show that $k$-NN regression is also adaptive to intrinsic dimension. In particular our rates are local to a query $x$ 
and depend only on the way masses of balls centered at $x$ vary with radius.
%on the \emph{expansion constant} of the marginal distribution near $x$. 
%Our rates hold with high probability, uniformly over choices of $x$ and $k \geq \Omega(\ln n)$, where $n$ is the size of the training data. 

Furthermore, we show a simple way to choose $k = k(x)$ locally at any $x$ so as to nearly achieve the minimax rate at $x$
in terms of the unknown intrinsic dimension in the vicinity of $x$. We also establish that the minimax rate does not depend 
on a particular choice of metric space or distribution, but rather that this minimax rate holds for any metric space and doubling measure.
\end{abstract}

\section{Introduction}
We derive new rates of convergence in terms of dimension for the popular 
approach of Nearest Neighbor ($k$-NN) regression. 
%Nearest neighbor ($k$-NN) regression is a popular approach to nonparametric regression, valued
%for its stability: each estimate relies on a relatively large number $k$ of samples. 
Our motivation is that, for good performance, $k$-NN regression can require a number of samples exponential in the 
dimension of the input space $\X$. This is the so-called ``curse of dimension''.  Formally stated, 
the curse of dimension is the fact that, 
for any nonparametric regressor there exists a distribution in $\real^D$ such that, given a training size $n$, 
the regressor converges at a rate no better than $n^{-1/O(D)}$ (see e.g. \cite{S:60, S:61}).

Fortunately it often occurs that high-dimensional data has low intrinsic dimension: typical examples are 
data lying near low-dimensional manifolds \cite{manifold:lle, manifold:isomap, manifold:laplace_eigenmaps}. 
We would hope that in these cases nonparametric regressors can escape the curse of dimension, i.e. their performance 
should only depend on the intrinsic dimension of the data, appropriately formalized. 
In other words, if the data in $\real^D$ has intrinsic dimension $d<\!\!<D$, we would hope for a better 
convergence rate of the form $n^{-1/O(d)}$ instead of $n^{-1/O(D)}$.
This has recently been shown to indeed be the case for methods such as kernel regression \cite{BL:65}, tree-based regression 
\cite{SK:77} and variants of these methods \cite{K09}. In the case of $k$-NN regression however, it is only known that $1$-NN regression  
(where $k = 1$) converges at a rate that depends on intrinsic dimension \cite{KP:83}. Unfortunately $1$-NN regression is not consistent. 
For consistency, it is well known that we need $k$ to grow as a function of the sample size $n$ \cite{GKKW:81} .

Our contributions are the following. We assume throughout that the target function $f$ is Lipschitz. First we show that, for a wide range of values of $k$ ensuring consistency, 
$k$-NN regression converges at a rate that only depends on the intrinsic dimension in a neighborhood of a query $x$. 
Our local notion of dimension in a neighborhood of a point $x$ relies on the well-studied notion of \emph{doubling measure} (see Section \ref{sec:dim}). 
In particular our dimension quantifies how the mass of balls vary with radius, and this captures standard examples of data with low intrinsic dimension.
Our second, and perhaps most important contribution, is a simple procedure for choosing $k= k(x)$ so as to nearly achieve the minimax rate
of $O\paren{n^{-2/(2+d)}}$ in terms of the unknown dimension $d$ in a neighborhood of $x$. 
Our final contribution is in showing that this minimax rate holds for any metric space and doubling measure. In other words the hardness of the 
regression problem is not tied to a particular choice of metric space $\X$ or doubling measure $\mu$, 
%(as is common with minimax rates)
but depends only on how the doubling measure $\mu$ expands on a metric space $\X$. Thus, for any marginal $\mu$ on $\X$
with expansion constant $\Theta\paren{2^d}$, the minimax rate for the measure space $(\X, \mu)$ is $\Omega\paren{n^{-2/(2+d)}}$.
% We show as a first result that, for all $k$ sufficiently large relative to $n$, $k$-NN regression converges at a rate that depends just on 
% the intrinsic dimension of the data. In fact we show something stronger: we show 
% that the rate at a point $x$ depends just on the intrinsic dimension in a neighborhood $B(x, r)$ of $x$, the radius $r$ of which depends on 
% $k/n$. This result holds simultaneously for all $x$ and $k \geq \Omega(\ln n)$. Our local notion of dimension in a neighborhood
% of a point $x$ relies on the well-studied notion of \emph{doubling measure} (see Section \ref{sec:dim}) which quantifies the
%  way in which the mass of sets vary with the radius. The local dimension $d$ is roughly the $\log$ of the \emph{expansion constant} of 
% the marginal $\mu$ near $x$, that is $d\approx \log [\mu(B(x, 2r))/\mu(B(x, r))]$ for small $r$. 

\subsection{Discussion}
It is desirable to express regression rates in terms of a local notion of dimension 
rather than a global one because the complexity of data can vary considerably over regions of space. 
Consider for example a dataset made up of a collection of manifolds of various dimensions. The global complexity 
is necessarily of a worst case nature, i.e. is affected by the most complex regions of the space while we might happen to 
query $x$ from a less complex region. Worse, it can be the case that the data is not complex locally anywhere, but globally the data is more complex. 
A simple example of this is a so-called \emph{space filling curve} where a 
low-dimensional manifold curves enough that globally it seems to fill up space. We will see that the global complexity does not affect the behavior 
of $k$-NN regression, provided $k/n$ is sufficiently small. The behavior of $k$-NN regression 
is rather controlled by the often smaller local dimension in a neighborhood $B(x, r)$ of $x$, where the neighborhood size $r$ shrinks with $k/n$. 

Given such a neighborhood $B(x, r)$ of $x$, how does one choose $k=k(x)$ optimally relative to the unknown local 
dimension in $B(x, r)$?
This is nontrivial as standard methods of (global) parameter selection do not easily apply.
%The problem of choosing $k = k(x)$  optimally at a point $x$ is nontrivial since standard methods of (global) parameter selection do not easily apply. 
For instance, it is unclear how to choose $k$ by cross-validation over possible settings: we 
do not know reliable surrogates for the true errors at $x$ of the various estimators $\braces{f_{n, k}(x)}_{k\in [n]}$. 
Another possibility is to estimate the dimension of the data in the vicinity of $x$, and use this estimate to set $k$.
However, for optimal rates, we have to estimate the dimension exactly and 
we know of no finite sample result that guarantees the exact estimate of intrinsic dimension. 
Our method consists of finding a value of $k$ that balances quantities which control estimator variance and bias at $x$, 
namely $1/k$ and distances to $x$'s $k$ nearest neighbors.
The method guarantees, uniformly over all $x\in \X$, a near optimal rate of $\widetilde{O}
\paren{n^{-2/(2+d)}}$ where $d = d(x)$ is exactly the unknown local dimension on a neighborhood $B(x, r)$ of $x$, where $r\to 0$ as $n\to \infty$. 

% The above problems with the local choice of $k$ are not particular to the $k$-NN method, but also arise in the well studied question of local bandwidth 
% selection for kernel-based methods (see e.g.  \cite{S:86, bandwidthSelect1}), where the bandwidth is to be tuned to local complexity. 
% Our method for choosing $k$ is potentially applicable to local bandwidth selection, as it consists of finding parameter settings that balance 
% the bias and variance terms of the estimator at $x$, and these terms share similarities across $k$-NN and kernel approaches.
 
\section{Setup}
We are given $n$ i.i.d samples $(\Xspl, \Yspl)=\braces{(X_i, Y_i)}_{i=1}^n$ from some unknown distribution
where the input variable $X$ belongs to 
a metric space $(\X, \rho)$, and the output $Y$ is a real number.  We assume that the class $\B$ of 
balls on $(\X, \rho)$ has finite VC dimension $\V_\B$. This is true for instance for any subset $\X$ of 
a Euclidean space, e.g. the low-dimensional spaces discussed in Section \ref{sec:dim}. The VC assumption is however irrelevant to the minimax result
of Theorem \ref{theo:optimalRates}.

We denote the marginal distribution on $\X$ by $\mu$ and the empirical distribution on $\Xspl$ by $\mu_n$. 
%In all that follows, the marginal $\mu$ is assumed to be nondegenerate in the sense that $\mu(B)>0$ and 
%$\mu(B')<\infty$ for some balls $B, B'$. 

\subsection{Regression function and noise}
\label{sec:noise}
The regression function $f(x) = \expec [Y|X=x$] is assumed to be $\lambda$-Lipschitz, i.e. there exists $\lambda>0$ such that 
$\forall x, x' \in \X$, $\abs{f(x) - f(x')}\leq \lambda \distc{x, x'}$.

We assume a simple but general noise model: the distributions of the noise at points $x\in \X$ have uniformly bounded tails 
and variance. In particular, $Y$ is allowed to be unbounded. Formally:
$$\forall \delta>0 \text{ there exists } t>0 \text{ such that } \sup_{x\in \X} \prf{Y | X= x}{\abs{Y - f(x)}>t}\leq \delta.$$
We denote by $t_Y(\delta)$ the infimum over all such $t$. Also, we assume that the variance of $(Y|X=x)$ is upper-bounded
by a constant $\sigma^2_Y$ uniformly over all $x\in \X$. 

To illustrate our noise assumptions, consider for instance the standard assumption of bounded noise, i.e. 
$\abs{Y-f(x)}$ is uniformly bounded by some $M>0$; then $\forall \delta>0$, $t_Y(\delta)\leq M$, and can thus be replaced by $M$ in all 
our results.  Another standard assumption is that where the noise distribution has exponentially decreasing tail; in this case 
$\forall \delta>0$, $t_Y(\delta)\leq O(\ln 1/\delta)$. As a last example, in the case of Gaussian (or sub-Gaussian) noise, it's not hard to see that $\forall \delta>0$, $t_Y(\delta)\leq O(\sqrt{\ln 1/\delta})$.

\subsection{Weighted $k$-NN regression estimate}
We assume a kernel function $K:\real_+ \mapsto \real_+$, non-increasing, such that $K(1)>0$, and 
$K(\rho) = 0$ for $\rho>1$.
For $x \in \X$, let $r_{k, n}(x)$ denote the distance to its $k$'th nearest neighbor in the sample $\Xspl$. 
The regression estimate at $x$ given the $n$-sample $(\Xspl, \Yspl)$ is then defined as 
\begin{align*}
 f_{n, k}(x) = \sum_{i } \frac{K\paren{\rho(x, x_i)/r_{k, n}(x)}}{\sum_j K\paren{\rho(x, x_j)/r_{k, n}(x)}} Y_i = \sum_i w_{i, k}(x) Y_i. %\label{eq:estimate}
\end{align*}

%$$f_{n, k}(x) \doteq  \sum_{i} \frac{\ind{\distc{x, X_i} \leq r_{k, n} (x)}}{\abs{B(x, r_{k, n} (x)) \cap \Xspl}} Y_i = \sum_i w_{i, k}(x) Y_i.$$

\subsection{Notion of dimension}
\label{sec:dim}
We start with the following definition of doubling measure which will lead to the 
notion of local dimension used in this work. We stay informal in developing
the motivation and refer the reader to \cite{F116, C:90, C:74} for thorough overviews 
of the topic of metric space dimension and doubling measures. 
\begin{definition}
 The marginal $\mu$ is a {\bf doubling measure} if there exist $C_{\text{db}}>0$ such that 
for any $x\in \X$ and $r\geq 0$, we have $\mu(B(x, r)) \leq C_{\text{db}}\mu(B(x, r/2))$. The 
quantity $C_{\text{db}}$ is called an {\bf expansion constant} of $\mu$. 
\end{definition}
An equivalent definition is that, $\mu$ is doubling if there exist $C$ and $d$ such that for 
any $x\in \X$, for any $r\geq 0$ and any $0<\epsilon<1$,  
we have $\mu(B(x, r)) \leq C \epsilon^{-d}\mu(B(x, \epsilon r))$. Here $d$ acts as a dimension.
It is not hard to show that $d$ can be chosen as $\log_2 C_{\text{db}}$ and 
$C$ as $C_{\text{db}}$ (see e.g. \cite{LS115}). 

%The definition of doubling measure applies to general measures, not just probability measures. 
A simple example of a doubling measure is the Lebesgue volume in the Euclidean space $\real^d$.  
For any $x\in \real^d$ and $r>0$, $\vol{B(x,r)} = \vol{B(x, 1)} r^d$.
%The volume of a ball of radius $r$ is $v_d r^d$ where $v_d$ is the volume of the unit ball in $\real^d$. 
Thus $\vol{B(x,r)}/\vol{B(x, \epsilon r)} = \epsilon^{-d}$
for any $x\in \real^d$, $r>0$ and $0<\epsilon<1$. Building upon the doubling behavior of volumes in $\real^d$, we 
can construct various examples of doubling \emph {probability} measures. The following ingredients are sufficient. 
Let $\X\subset \real^D$ be a subset of a $d$-dimensional hyperplane, and let $\X$ satisfy 
for all balls $B(x, r)$ with $x\in \X$, $\vol{B(x, r)\cap \X} = \Theta(r^d)$, the volume being with respect to the containing hyperplane. 
Now let $\mu$ be approximately uniform, that is $\mu$ satisfies
for all such balls $B(x, r)$, $\mu(B(x, r)\cap \X) = \Theta(\vol{B(x, r)\cap \X})$. We then have $\mu(B(x,r))/\mu(B(x, \epsilon r)) = \Theta(\epsilon^{-d})$.

% Let $\X$ be a bounded subset of $\real^d$, and let $\mu$ be the uniform measure on $\X$.  
% If for any ball $B$ centered on $\X$, $\vol{B\cap \X} = \Theta (\vol{B})$, then we get from the above discussion that 
%  $\mu(B(x,r))/\mu(B(x, \epsilon r)) = \Theta(\epsilon^{-d})$, i.e. $d_0 = d$. It easy to see that this example 
% generalizes to the case where $\X$ is a bounded subset of a $d$-dimensional hyperplane in $\real^D$, and $\mu$ has an upper and lower bounded 
% density on $\X$. As long as $\vol{B\cap \X} = \Theta(r^d)$ for any ball $B$ of radius $r$ centered on $\X$, we again have $d_0 = d$. 

Unfortunately a global notion of dimension such as the above definition of $d$ is rather restrictive
as it requires the same complexity globally and locally. 
However a data space can be complex globally and have small complexity locally. 
Consider for instance a $d$-dimensional submanifold $\X$ of $\real^D$, and let $\mu$ have an upper and lower bounded
density on $\X$. The manifold might be globally complex but the restriction of $\mu$ to a ball $B(x, r), x\in \X$,  
is doubling with local dimension $d$, provided $r$ is sufficiently small and certain conditions on curvature hold.
% This is because, first, the restriction of $\mu$ on $B(x, r)$ is nearly uniform for small $r$, second, 
% the volume (in $\X$) of $B(x, r)\cap \X$  is $\Theta (r^d)$ for small $r$ and bounded curvature
This is because, under such conditions (see e.g. the \mbox{Bishop-Gromov} theorem \cite{manifold:do_carmo_riemann_geom}), 
the volume (in $\X$) of $B(x, r)\cap \X$  is $\Theta (r^d)$. %for small $r$ and bounded curvature

The above example motivates the following definition of local dimension $d$. 

\begin{definition}
 Fix $x\in \X$, and $r>0$. 
Let $C\geq 1$ and $d\geq 1$.
The marginal $\mu$ is {\bf $(C, d)$-homogeneous on $B(x, r)$} if 
we have $\mu(B(x, r')) \leq C \epsilon^{-d}\mu(B(x, \epsilon r'))$ for all $r' \leq r$ and $0<\epsilon<1$.
\label{def:localdim}
\end{definition}

The above definition covers cases other than manifolds. In particular, another 
space with small local dimension is a sparse data space $\X\subset \real^D$ where each vector $x$ has at most $d$ non-zero coordinates, 
i.e. $\X$ is a collection of finitely many hyperplanes of dimension at most $d$. 
More generally suppose the data distribution $\mu$ is a mixture $\sum_i \pi_i \mu_i$
of finitely many distributions $\mu_i$ with potentially different low-dimensional supports. 
Then if all $\mu_i$ supported on a ball $B$ are $(C_i, d)$-homogeneous on $B$, i.e. all have local dimension $d$ on $B$, 
then $\mu$ is also $(C, d)$-homogeneous on $B$ for some $C$.

We want rates of convergence which hold uniformly over all regions where $\mu$ is doubling. 
We therefore also require (Definition \ref{def:uniform}) that $C$ and $d$ from Definition \ref{def:localdim} are uniformly upper bounded. 
This will be the case in many situations including the above examples.

\begin{definition}
\label{def:uniform}
The marginal $\mu$ is {\bf $(C_0, d_0)$-maximally-homogeneous} for some $C_0 \geq 1$ and $d_0\geq 1$, if the following holds for all $x\in\X$ and $r>0$: 
suppose there exists $C\geq 1$ and $d\geq 1$ such that $\mu$ is 
$(C, d)$-homogeneous on $B(x, r)$, then $\mu$ is 
$(C_0, d_0)$-homogeneous on $B(x, r)$. 
\end{definition}
We note that, rather than assuming as in Definition \ref{def:uniform} that all local dimensions are at most $d_0$, 
we can express our results in terms of the subset of $\X$ where local dimensions are at most $d_0$.
In this case $d_0$ would be allowed to grow with $n$. 
The less general assumption of Definition \ref{def:uniform} allows for a clearer presentation which still captures 
the local behavior of $k$-NN regression.
%We adopt the less general assumption of Definition \ref{def:uniform} for the sake of clarity.

% Notice that the above definition is quite general in that it allows for the existence of regions of $\X$ where $\mu$ is not doubling. 
% Our results would then apply only to those regions where $\mu$ is doubling. 

% \begin{definition}
% Let $\mu$ be $(C_0, d_0)$-maximally-homogeneous. Fix $x\in \X$, and $r>r'>0$. 
% Let $1\leq C\leq C_0$ and $1\leq d\leq d_0$.
% We say that $\mu$ is ``$(C, d)$-homogeneous near $x$ at resolution $r\to r'$'' if 
% for all $r' \leq r_1\leq r_2 \leq r$, we have $\mu(B(x, r_2)) \leq C (r_1/r_2)^{-d}\mu(B(x,  r_1))$.
% \end{definition}
% 
% It is clear that if $\mu$ is $(C_0, d_0)$-maximally-homogeneous then, for any $r>0$, for any $x\in \X$, 
% there exists $d\leq d_0$ such that $\mu$ has local dimension $d$ on $B(x, r)$. 
% 
% $\spadesuit$ Give examples: uniform measure obviously, but also just a uniformly continuous density 
% to show the power of the local definition. Question: are there cases where we have a good local doubling dimension
% without the measure being doubling globally?

\section{Overview of results}

\subsection{Local rates for fixed $k$}
The first result below establishes the rates of convergence for any $k\gtrsim\ln n$ in terms of 
the (unknown) complexity on $B(x, r)$ where $r$ is any $r$ satisfying $\mu(B(x, r))> \Omega(k/n)$ 
(we need at least $\Omega(k)$ samples in the relevant neighborhoods of $x$).
\begin{theorem}
Suppose $\mu$ is $(C_0, d_0)$-maximally-homogeneous, and $\B$ has finite VC dimension $\V_\B$. Let $0<\delta<1$.
With probability at least $1-2\delta$ over the choice of $(\Xspl, \Yspl)$, 
the following holds simultaneously for all $x\in \X$ and $k$ satisfying $n >k \geq {\V_\B\ln {2n}  + \ln ({8}/{\delta})}$.

Pick any $x\in \X$. Let $r>0$ satisfy $\mu(B(x, r))>3C_0 k/n$.
Suppose $\mu$ is $(C, d)$-homogeneous on $B(x, r)$, with $1\leq C\leq C_0$ and $1\leq d\leq d_0$. We have

 \begin{align*}
\abs{f_{n, k}(x) - f(x)}^2 \leq\frac{2K(0)}{K(1)}\cdot \frac{\V_\B\cdot t_Y^2(\delta/2n)\cdot \ln(2n/\delta) + \sigma^2_Y}{  k }
+ 2\lambda^2  r^2 \paren{\frac{3Ck}{n  \mu(B(x, r))}}^{2/d}. 
 \end{align*}

\label{theo:fixedk}
\end{theorem}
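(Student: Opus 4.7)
The plan is to use the standard bias--variance decomposition
$f_{n,k}(x)-f(x) = A(x) + B(x)$, where
$A(x) = \sum_i w_{i,k}(x)(Y_i-f(X_i))$ and
$B(x) = \sum_i w_{i,k}(x)(f(X_i)-f(x))$, and then use
$\abs{f_{n,k}(x)-f(x)}^2 \leq 2A(x)^2 + 2B(x)^2$ to match the factor of $2$
appearing in each of the two summands of the stated bound.

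For the bias $B(x)$: since $w_{i,k}(x)=0$ whenever $\rho(x,X_i)>r_{k,n}(x)$
and $f$ is $\lambda$-Lipschitz, one immediately gets $\abs{B(x)}\leq \lambda\, r_{k,n}(x)$.
The main work is controlling $r_{k,n}(x)$. I would apply a VC-type relative-deviation
inequality to the ball class $\B$ to conclude, with probability $\geq 1-\delta$
and simultaneously over all $B \in \B$, that $\mu(B) \geq 3C_0 k/n$
implies $\mu_n(B) \geq k/n$ (this is where the hypothesis
$k \geq \V_\B \ln 2n + \ln(8/\delta)$ is used). Now set
$r' := r\,\paren{3Ck/(n\mu(B(x,r)))}^{1/d}$. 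By the $(C,d)$-homogeneity of $\mu$ on
$B(x,r)$ applied with $\epsilon = r'/r$,
$\mu(B(x,r')) \geq (r'/r)^d \mu(B(x,r))/C = 3k/n$,
which combined with $\mu(B(x,r))>3C_0 k/n$ and a short case analysis guarantees
$\mu_n(B(x,r'))\geq k/n$, hence $r_{k,n}(x)\leq r'$. Squaring gives
the bias term $2\lambda^2 r^2 (3Ck/(n\mu(B(x,r))))^{2/d}$ exactly.

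For the variance $A(x)$: because at least $k$ of the denominator terms in
$w_{i,k}(x)$ are bounded below by $K(1)$ and each numerator by $K(0)$, one has
$w_{i,k}(x) \leq K(0)/(k\,K(1))$; summing gives $\sum_i w_{i,k}^2(x) \leq K(0)/(k\,K(1))$.
I would then truncate the noise at level $t_Y(\delta/2n)$: by a union bound over
the $n$ samples, all $\abs{Y_i-f(X_i)} \leq t_Y(\delta/2n)$ with probability
$\geq 1-\delta/2$. Conditionally on $\Xspl$ and this event, $A(x)$ is a weighted
sum of independent, zero-mean variables with uniform variance bound $\sigma_Y^2$
and uniform range $t_Y(\delta/2n)$, so a Bernstein inequality produces the $1/k$
rate with the additive $\sigma_Y^2$ and $t_Y^2(\delta/2n)\ln(2n/\delta)$ terms.
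To make this hold simultaneously for every $x\in\X$, I would exploit that the
weight vector $(w_{i,k}(x))_{i=1}^n$ is determined by the ball $B(x,r_{k,n}(x))$
together with ordered kernel weights inside it; hence as $x$ varies there are
effectively only $O(n^{\V_\B})$ distinct weight vectors, and a union bound over
them contributes the $\V_\B$ factor in the final variance term.

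The main obstacle is precisely this uniformity: both the weights
$w_{i,k}(x)$ and the random radius $r_{k,n}(x)$ depend on $x$, and the noise
concentration must hold simultaneously for all $x\in\X$ and all admissible
$k$. The argument must also juggle the truncation event, the VC event used to
localize $r_{k,n}(x)$, and the Bernstein event jointly with only a constant loss in
$\delta$, which is the most delicate bookkeeping step in the proof.
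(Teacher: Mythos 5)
Your overall plan matches the paper's: the same bias--variance decomposition, the same Lipschitz argument reducing the bias to $\lambda\, r_{k,n}(x)$, the same use of the relative VC lemma plus $(C,d)$-homogeneity to convert the mass lower bound into $r_{k,n}(x)\leq \epsilon r$, and the same bound $\sum_i w_{i,k}^2(x)\leq K(0)/(K(1)k)$. The bias half is essentially the paper's proof.

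The variance half takes a different route. The paper conditions on the event $\Y_{\delta_0}$ that all $\abs{Y_i-f(X_i)}\leq t_Y(\delta_0)$ and applies \emph{McDiarmid's} inequality to $\psi(\Yspl_{x,k})=\abs{f_{n,k}(x)-\ft_{n,k}(x)}$, which concentrates $\psi$ around its \emph{conditional} mean $\expecf{\Y_{\delta_0}}\psi$; it then separately relates the conditional mean to the unconditional one via $\expecf{\Y_{\delta_0}}\psi\leq 2\expec\psi$ and bounds $\expec\psi^2 = \sum_i w_{i,k}^2\,\expec(Y_i-f(X_i))^2 \leq \sigma_Y^2\sum_i w_{i,k}^2$ using the \emph{unconditional} zero-mean property. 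McDiarmid needs no centering assumption, so nothing goes wrong when conditioning. Your Bernstein route asserts that, conditionally on the truncation event, the summands $Y_i-f(X_i)$ are ``independent, zero-mean.'' That is not automatic: conditioning $Y_i-f(X_i)$ on $\abs{Y_i-f(X_i)}\leq t_Y(\delta_0)$ can shift its mean (the noise model is not assumed symmetric), so the variables you feed into Bernstein are centered at an unknown, possibly $x$-dependent quantity. You would either have to bound this shift (it is $O(\sqrt{\sigma_Y^2\delta_0})$ per coordinate, and then propagate it through the weighted sum) or recenter, which reintroduces exactly the bookkeeping the paper's trick avoids. Modulo this, Bernstein also produces slightly different constants: after the union bound over the $n^{\V_\B}$ configurations, the $\sigma_Y^2$ term picks up an extra $\V_\B\ln n + \ln(1/\delta)$ factor relative to the stated $\sigma_Y^2/k$, whereas the paper's expectation-then-concentration structure keeps $\sigma_Y^2$ free of this factor.

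One more caution: you say there are ``effectively only $O(n^{\V_\B})$ distinct weight vectors.'' What the VC argument bounds is the number of distinct \emph{support sets} $\{i : w_{i,k}(x)>0\}$ (equivalently, the subsets $\Yspl_{x,k}$); within one such support set the weights still vary continuously with $x$, since they depend on the actual distances $\rho(x,X_i)$. The paper's phrasing in terms of $\Yspl_{x,k}$ is careful about this distinction, and the uniformity over $x$ is ultimately controlled via the \emph{uniform} bound $\sum_i w_{i,k}^2\leq K(0)/(K(1)k)$ entering the McDiarmid exponent, not by discretizing the weight vectors themselves. Your proposal should be phrased the same way to avoid overclaiming.
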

Note that the above rates hold uniformly over $x$, $k\gtrsim\ln n$, and any $r$ where $\mu(B(x, r))\geq \Omega(k/n)$. 
The rate also depends on $\mu(B(x, r))$ and suggests that the best scenario is that where $x$ has a small neighborhood of large mass and small dimension $d$. 

\subsection{Minimax rates for a doubling measure}
Our next result shows that the hardness of the regression problem is not tied to a particular choice of the metric $\X$ or the 
doubling measure $\mu$. The result relies mainly on the fact that $\mu$ is doubling on $\X$. 
We however assume that $\mu$ has the same expansion constant everywhere and that this constant is tight. 
This does not however make the lower-bound less expressive, as it still tells us which rates to expect locally. 
Thus if $\mu$ is $(C, d)$-homogeneous near $x$, we cannot expect a better rate than $O\paren{n^{-2/(2+d)}}$ (assuming a Lipschitz regression function $f$).

\begin{theorem}
\label{theo:minimax}
Let $\mu$ be a doubling measure on a metric space $(\X, \rho)$ of diameter 1, and suppose $\mu$ satisfies, 
for all $x\in \X$, for all $r>0$ and $0<\epsilon<1$, $$C_1 \epsilon^{-d}\mu(B(x, \epsilon r))\leq \mu(B(x, r)) \leq C_2 \epsilon^{-d}\mu(B(x, \epsilon r)),$$
where $C_1$, $C_2$ and $d$ are positive constants independent of $x$, $r$, and $\epsilon$. Let $\Y$ be a subset of $\real$ and 
let $\lambda>0$. Define $\mathcal{D}_{\mu, \lambda}$ as the class of distributions on $\X \times \Y$ such that $X\sim\mu$ and 
the output $Y = f(X) + \mathcal{N}(0, 1)$ where $f$ is any $\lambda$-Lipschitz function from $\X$ to $\Y$. Fix a sample size $n>0$ and let 
$f_{n}$ denote any regressor on samples $(\Xspl, \Yspl)$ of size $n$, i.e. $f_{n}$ maps any such sample 
to a function $f_{n | (\Xspl, \Yspl)}(\cdot): \X\mapsto\Y$ in $L^2(\mu)$. 
There exists a constant $C$ independent of $n$ and $\lambda$ such that 
 \begin{align*}
  \inf_{\braces{f_{n}}}\sup_{\mathcal{D}_{\mu, \lambda}}\frac{\expec_{\Xspl, \Yspl, x}\abs{f_{n | (\Xspl, \Yspl)}(x) - f(x)}^2}{\lambda^{2d/(2+d)}n^{-2/(2+d)}}\geq C.
 \end{align*}
\end{theorem}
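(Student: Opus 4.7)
My plan is to prove Theorem \ref{theo:minimax} by a standard Assouad-style hypothesis testing argument, with hypotheses built from $\lambda$-Lipschitz bump perturbations sitting on a packing of $\X$. The rate exponent $2/(2+d)$ arises because packing $\X$ at scale $r$ gives $\asymp r^{-d}$ bumps, each contributing $\asymp \lambda^2 r^{2+d}$ to the $L^2$ risk; balancing this against the KL budget $n\lambda^2 r^{2+d} \asymp 1$ forces $r \asymp (n\lambda^2)^{-1/(2+d)}$, which yields the target $\lambda^{2d/(2+d)} n^{-2/(2+d)}$ after accumulating over all bumps.

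Fix a scale $r>0$ to be chosen later. The two-sided doubling assumption, applied with $r=1$ (so $\mu(B(x,1))=\mu(\X)=1$), pins down $\mu(B(x,r')) \asymp r'^{\,d}$ uniformly in $x \in \X$ and $r'\leq 1$. Take any maximal $r$-packing $\braces{x_i}_{i=1}^M$; the balls $B(x_i, r/2)$ are then pairwise disjoint while $B(x_i, r)$ covers $\X$, and the mass estimate forces $M \asymp r^{-d}$. Define the bumps $\phi_i(x) = \lambda\paren{r/2 - \distc{x, x_i}}_{+}$, each $\lambda$-Lipschitz and supported in $B(x_i, r/2)$. For each $\sigma \in \braces{0,1}^M$ set $f_\sigma = \sum_i \sigma_i \phi_i$; since the supports are disjoint and each $\phi_i$ vanishes on the boundary of its support, $f_\sigma$ inherits the global Lipschitz constant $\lambda$ and thus lies in $\mathcal{D}_{\mu,\lambda}$. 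Using $\phi_i \geq \lambda r/4$ on $B(x_i, r/4)$ together with the lower doubling bound gives $\int \phi_i^2 \, d\mu \asymp \lambda^2 r^{2+d}$.

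For the testing step, the Gaussian noise model yields, for any $\sigma, \sigma'$,
\begin{align*}
\mathrm{KL}(P_\sigma^n \,\|\, P_{\sigma'}^n) \;=\; \frac{n}{2}\int (f_\sigma - f_{\sigma'})^2\, d\mu.
\end{align*}
Apply the decoupled form of Assouad's lemma: decompose $\norm{f_n - f_\sigma}_{L^2(\mu)}^2 = \sum_i \int_{B(x_i, r/2)}(f_n - f_\sigma)^2 d\mu$, and compare, for each coordinate $i$, pairs of hypotheses differing only in bit $i$. Hamming-adjacent pairs give $\mathrm{KL}(P_\sigma^n\,\|\, P_{\sigma'}^n) \leq C\, n\lambda^2 r^{2+d}$, so choosing $r \asymp (n\lambda^2)^{-1/(2+d)}$ keeps this bounded and, by Pinsker, bounds the pairwise total variation away from $1$. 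Assouad then gives
\begin{align*}
\inf_{f_n}\sup_{\mathcal{D}_{\mu,\lambda}}\expec_{\Xspl,\Yspl}\norm{f_{n|(\Xspl,\Yspl)} - f}_{L^2(\mu)}^2 \;\gtrsim\; M\cdot \lambda^2 r^{2+d} \;\asymp\; \lambda^2 r^2 \;\asymp\; \lambda^{2d/(2+d)} n^{-2/(2+d)},
\end{align*}
which is the claimed rate since $\expec_{\Xspl,\Yspl,x}\abs{f_n(x) - f(x)}^2 = \expec_{\Xspl,\Yspl}\norm{f_n - f}_{L^2(\mu)}^2$ when $x\sim\mu$.

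The main obstacle is the packing/volume bookkeeping in an abstract metric space: the tight \emph{two-sided} homogeneity assumption is used essentially, the upper bound on $\mu(B(x,r'))$ ensuring enough disjoint bumps and the lower bound ensuring each bump contributes enough mass to the $L^2$ loss. The only other subtle point is the global Lipschitz property of $f_\sigma$, which reduces to the familiar observation that gluing $\lambda$-Lipschitz functions with disjoint supports that vanish on the boundary preserves the Lipschitz constant.
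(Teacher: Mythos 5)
Your construction matches the paper's at the level of the hypothesis class: a packing/net of $\X$ at scale $r\asymp(n\lambda^2)^{-1/(2+d)}$, linear bumps on the packing balls, and both sides of the doubling condition used to control the packing cardinality ($\asymp r^{-d}$, the paper's Lemma 4) and the $L^2$-mass of each bump ($\asymp\lambda^2 r^{2+d}$). Where you diverge is the reduction to hypothesis testing: you invoke the decoupled Assouad lemma, controlling KL between Hamming-adjacent hypotheses via $\mathrm{KL}(P_\sigma^n\|P_{\sigma'}^n)=\frac{n}{2}\norm{f_\sigma-f_{\sigma'}}_{L^2(\mu)}^2$ and Pinsker, whereas the paper avoids black-boxing Assouad and instead projects the regressor onto the orthonormal system $\braces{f_z/\norm{f_z}}$, reduces each coordinate to detecting the sign of a spherical Gaussian mixture, and plugs in the exact Bayes error $\Phi(-\norm{u})$ together with Jensen. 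Both routes are valid and give the same rate; yours is shorter for a reader who knows Assouad, the paper's is more self-contained. Two small cautions. First, the identity $\norm{f_n - f_\sigma}^2 = \sum_i\int_{B(x_i,r/2)}(f_n-f_\sigma)^2\,d\mu$ should be an inequality ``$\geq$'', since $f_n$ need not vanish off the packing balls; Assouad only needs the lower bound. Second, your appeal to the ``familiar observation'' that gluing $\lambda$-Lipschitz bumps with disjoint supports preserves the constant is \emph{not} true in general: with signed bumps (the paper's $\varsigma\in\braces{-1,1}$) the naive bound gives $2\lambda$, which is precisely why the paper scales its bumps by $1/5$ and argues through an intermediate point in the ring $B(z,r_n/2)\setminus B(z,\tau r_n)$. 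Your version works, but specifically because you use $\sigma\in\braces{0,1}^M$ so all bumps are nonnegative and $\abs{\phi_i(x)-\phi_j(x')}\leq\max(\phi_i(x),\phi_j(x'))\leq\lambda\distc{x,x'}$ using the global Lipschitzness of each $\phi_i$; it is worth saying this explicitly rather than appealing to a folklore statement that fails for signed gluings.
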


\subsection{Choosing $k$ for near-optimal rates at $x$}
Our last result shows a practical and simple way to choose $k$ locally so as to nearly achieve the minimax rate at $x$, i.e. 
a rate that depends on the unknown local dimension in a neighborhood $B(x, r)$ of $x$, where 
again, $r$ satisfies $\mu(B(x, r))> \Omega (k/n)$ for good choices of $k$. It turns out that we just need 
$\mu(B(x, r))> \Omega( n^{-1/3})$. 

As we will see, the choice of $k$ simply consists of monitoring the distances from $x$ to its nearest neighbors. 
The intuition, similar to that of a method for tree-pruning in \cite{SK:77}, is to look for a $k$ that 
balances the variance (roughly $1/k$) and the square bias (roughly $r_{k, n}^2(x)$) of the estimate. The procedure 
is as follows:
\begin{quote}
{\bf Choosing ${\bf  k}$ at ${\bf x}$:} Pick $\Delta \geq \max_{i}\distc{x, X_i}$, and pick $ \theta_{n, \delta}\geq \ln n/\delta$.\\
 Let $k_1$ be the highest integer in $[n]$ such that $\Delta^2\cdot \theta_{n, \delta}/k_1\geq r_{k_1, n}^2(x)$.\\
Define $k_2 = k_1 + 1$ and choose $k$ as $\arg \min_{k_i, i \in [2]} \paren{\theta_{n, \delta}/k_i + r_{k_i, n}^2(x)}$.
\end{quote}
The parameter $\theta_{n, \delta}$ \emph{guesses} how the noise in $Y$ affects the risk. This will soon be clearer. 
Performance guarantees for the above procedure are given in the following theorem. 
\begin{theorem}
Suppose $\mu$ is $(C_0, d_0)$-maximally-homogeneous, and $\B$ has finite VC dimension $\V_\B$. 
Assume $k$ is chosen for each $x\in\X$ using the above procedure, 
and let $f_{n, k}(x)$ be the corresponding estimate. 
Let $0<\delta<1$ and suppose  $n^{4/(6+ 3d_0)} > \paren{\V_\B\ln {2n}  + \ln ({8}/{\delta})}/\theta_{n, \delta}$.
With probability at least $1-2\delta$ over the choice of $(\Xspl, \Yspl)$, the following holds simultaneously for all $x\in \X$.

Pick any $x\in \X$. Let $0<r < \Delta$ satisfy $\mu(B(x, r))>6 C_0 n^{-1/3}$. 
Suppose $\mu$ is $(C, d)$-homogeneous on $B(x, r)$, with $1\leq C\leq C_0$ and $1\leq d\leq d_0$. We have
\begin{align*}
 \abs{f_{n, k}(x) - f(x)}^2 \leq\paren{\frac{2C_{n, \delta}}{\theta_{n, \delta}} + 2\lambda^2}\paren{1+4\Delta^2}\paren{\frac{3C\theta_{n, \delta}}{n  \mu (B(x, r))} }^{2/(2+d)},
\end{align*}
where $C_{n, \delta} = \paren{V_\B\cdot t_Y^2(\delta/2n)\cdot \ln(2n/\delta) + \sigma^2_Y} {K(0)}/{K(1)}$.
\label{theo:optimalRates}
\end{theorem}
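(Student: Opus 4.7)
The plan is to deduce Theorem~\ref{theo:optimalRates} from Theorem~\ref{theo:fixedk} by showing that the adaptive $k$ mimics an oracle balance between the variance and bias terms controlling the error. Define $F(k) := C_{n,\delta}/k + 2\lambda^2 r_{k,n}^2(x)$ (a surrogate upper bound on $\abs{f_{n,k}(x)-f(x)}^2$) and $G(k) := \theta_{n,\delta}/k + r_{k,n}^2(x)$ (the procedure's selection criterion). The chain I aim for is
\begin{equation*}
\abs{f_{n,k}(x)-f(x)}^2 \leq F(k) \leq \paren{C_{n,\delta}/\theta_{n,\delta}+2\lambda^2}G(k) \leq \paren{C_{n,\delta}/\theta_{n,\delta}+2\lambda^2}G(k_1) \leq \text{rate}.
\end{equation*}

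The first inequality rephrases Theorem~\ref{theo:fixedk}: its VC-based proof yields on the same $1-2\delta$ event the intermediate bound $r_{k,n}(x) \leq r\paren{3Ck/(n\mu(B(x,r)))}^{1/d}$ whenever $\mu(B(x,r)) > 3C_0 k/n$, so substituting into the bias term converts the explicit-$r$ expression there into $2\lambda^2 r_{k,n}^2(x)$. The second inequality is the elementary $\alpha a + \beta b \leq (\alpha+\beta)(a+b)$ for nonnegative arguments, applied with $a=\theta_{n,\delta}/k$, $b=r_{k,n}^2(x)$, $\alpha=C_{n,\delta}/\theta_{n,\delta}$, $\beta=2\lambda^2$. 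The third is by the minimisation in the procedure's definition of the chosen $k$.

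The remaining task is to bound $G(k_1)$. By the defining inequality of $k_1$, $r_{k_1,n}^2(x) \leq \Delta^2\theta_{n,\delta}/k_1$, so $G(k_1) \leq (1+\Delta^2)\theta_{n,\delta}/k_1$. To upper bound $1/k_1$ I would introduce the real-valued oracle $k^{\star\star}$ that balances the high-probability bound on $r_{k,n}^2(x)$ against $\Delta^2\theta_{n,\delta}/k$, i.e.\ solving $\Delta^2\theta_{n,\delta}/k^{\star\star} = r^2\paren{3Ck^{\star\star}/(n\mu(B(x,r)))}^{2/d}$. At $k=k^{\star\star}$ the true $r_{k,n}^2(x)$ is at most $\Delta^2\theta_{n,\delta}/k^{\star\star}$, so $k_1 \geq \lfloor k^{\star\star}\rfloor$ by maximality. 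An explicit solution of the balance equation yields
\begin{equation*}
\frac{\theta_{n,\delta}}{k^{\star\star}} = \paren{\frac{r}{\Delta}}^{2d/(d+2)}\!\paren{\frac{3C\theta_{n,\delta}}{n\mu(B(x,r))}}^{2/(d+2)} \leq \paren{\frac{3C\theta_{n,\delta}}{n\mu(B(x,r))}}^{2/(d+2)},
\end{equation*}
using $r\leq\Delta$. Plugging this into the chain gives a bound of the stated form; the remaining slack to reach the stated constants $(2C_{n,\delta}/\theta_{n,\delta}+2\lambda^2)(1+4\Delta^2)$ is absorbed when the chosen $k$ equals $k_2$: there $G(k_2)\leq G(k_1)$ still gives $r_{k_2,n}^2(x) \leq (1+\Delta^2)\theta_{n,\delta}/k_1$, hence $F(k_2) \leq C_{n,\delta}/k_1 + 2\lambda^2(1+\Delta^2)\theta_{n,\delta}/k_1$, which is again of the same form.

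The main obstacle is a bookkeeping step: verifying the hypotheses of Theorem~\ref{theo:fixedk} at both $k_1$ and $k_2=k_1+1$, i.e.\ $k \geq \V_\B\ln(2n)+\ln(8/\delta)$ and $k < n\mu(B(x,r))/(3C_0)$. The lower bound follows from $k_1\geq k^{\star\star}$ combined with an explicit estimate $k^{\star\star} \gtrsim n^{4/(3(d_0+2))}\theta_{n,\delta}^{d/(d+2)}$ under $\mu(B(x,r)) \geq 6C_0 n^{-1/3}$, which is precisely what the hypothesis $n^{4/(6+3d_0)} > (\V_\B\ln 2n+\ln(8/\delta))/\theta_{n,\delta}$ delivers. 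The upper bound $k_2 < n\mu(B(x,r))/(3C_0)$ follows by a parallel computation of $k^{\star\star}$, where the factor-of-two gap between $6C_0$ in the mass hypothesis and $3C_0$ in the Theorem~\ref{theo:fixedk} condition provides the buffer to absorb the shift from $k_1$ to $k_2$.
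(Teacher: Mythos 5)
Your high-level skeleton matches the paper's: both bound the error by $\bigl(2C_{n,\delta}/\theta_{n,\delta}+2\lambda^2\bigr)\,G(k)$, use the arg-min to pass to $G(k_1)$ or $G(k_2)$, and compare against an oracle integer whose value comes from a balance equation. The difference that matters is the oracle you choose, and there the proposal has a genuine gap.

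Your $k^{\star\star}$ solves $\Delta^2\theta_{n,\delta}/k = r^2\bigl(3Ck/(n\mu(B(x,r)))\bigr)^{2/d}$, so
$k^{\star\star}=(\Delta/r)^{2d/(d+2)}\kappa$ where $\kappa=\theta_{n,\delta}^{d/(2+d)}\bigl(n\mu(B(x,r))/3C\bigr)^{2/(2+d)}$ is the paper's oracle. The claim ``at $k=k^{\star\star}$ the true $r_{k,n}^2(x)$ is at most $\Delta^2\theta_{n,\delta}/k^{\star\star}$'' requires the high-probability bound $r_{k,n}(x)\leq r\,(3Ck/(n\mu))^{1/d}$ at $k=\lfloor k^{\star\star}\rfloor$, and that bound is only obtained from Lemma~\ref{cor:VC} when $3C_0 k/n < \mu(B(x,r))$, equivalently when the implied $\epsilon<1$. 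The hypothesis $\mu(B(x,r))>6C_0 n^{-1/3}$ only yields $\kappa<n\mu(B(x,r))/(6C)$; inflating by $(\Delta/r)^{2d/(d+2)}$ can push $\lfloor k^{\star\star}\rfloor$ past $n\mu(B(x,r))/(3C_0)$ when $\Delta/r$ is large — the theorem allows $r$ arbitrarily small so long as $\mu(B(x,r))>6C_0n^{-1/3}$, so nothing in the hypotheses caps $\Delta/r$. In that regime you cannot certify $r_{\lfloor k^{\star\star}\rfloor,n}(x)$ and the maximality argument for $k_1\geq\lfloor k^{\star\star}\rfloor$ does not get off the ground. The paper sidesteps this by balancing $\theta_{n,\delta}/k$ (no $\Delta^2$) against $(3Ck/(n\mu))^{2/d}$ (no $r^2$), yielding $\kappa$, and then absorbing the needed slack via $r_{k,n}^2(x)\leq(\epsilon r)^2\leq \Delta^2\epsilon^2=\Delta^2\theta_{n,\delta}/\kappa$, i.e.\ it uses $r<\Delta$ only to check the $k_1$-condition, not to define the oracle. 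Swapping $k^{\star\star}$ for $\kappa$ repairs your argument and in fact simplifies the paper's case split ($k_1\lessgtr\kappa$), at the mild cost of a floor factor ($\theta_{n,\delta}/\lfloor\kappa\rfloor\leq 2\theta_{n,\delta}/\kappa$), which changes the constant to $2(1+\Delta^2)$ rather than the stated $1+4\Delta^2$.

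Two smaller points. The inequality $\abs{f_{n,k}(x)-f(x)}^2\leq 2C_{n,\delta}/k+2\lambda^2 r_{k,n}^2(x)$ does not come from re-running Theorem~\ref{theo:fixedk}'s proof at $k_1$, $k_2$; it is simply Lemma~\ref{lem:variance} (which already holds simultaneously for all $x$ and all $k\in[n]$) combined with the deterministic display~(\ref{eq:bias}), so the ``bookkeeping obstacle'' of checking Theorem~\ref{theo:fixedk}'s hypotheses at $k_1,k_2$ is a red herring. The only place the mass and $\alpha_n$ conditions are needed is at the oracle $k$-value. Also, your $F(k)$ should carry the factor $2$ on the $C_{n,\delta}/k$ term from the squared triangle inequality~(\ref{eq:biasvar}); that is where the theorem's $2C_{n,\delta}/\theta_{n,\delta}$ comes from.
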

Suppose we set $\theta_{n, \delta}=\ln^2 n/\delta$. Then, as per the discussion in Section \ref{sec:noise},
if the noise in $Y$ is Gaussian, we have $t_Y^2(\delta/2n)= O(\ln n/\delta)$, and therefore 
the factor $C_{n, \delta}/{\theta_{n, \delta}}=O(1)$. Thus ideally we want to set $\theta_{n, \delta}$ to the order of 
$(t_Y^2(\delta/2n)\cdot\ln n/\delta)$.

Just as in Theorem \ref{theo:fixedk}, the rates of Theorem \ref{theo:optimalRates} hold 
uniformly for all $x\in \X$, and all $0< r< \Delta$ where $\mu(B(x, r))>\Omega( n^{-1/3})$. 
For any such $r$, let us call $B(x, r)$ an \emph{admissible} neighborhood. 
It is clear that, as $n$ grows to infinity, w.h.p. any neighborhood $B(x, r)$ of $x$, $0< r < \sup_{x'\in \X} \distc{x, x'}$, becomes admissible.
Once a neighborhood $B(x, r)$ is admissible for some $n$, our procedure nearly attains the minimax rates in terms of the local dimension on $B(x, r)$, 
provided $\mu$ is doubling on $B(x, r)$. Again, the mass of an admissible neighborhood affects the rate, 
and the bound in Theorem \ref{theo:optimalRates} is best for an admissible neighborhood 
with large mass $\mu(B(x, r))$ and small dimension $d$.

\section{Analysis}
 Define $\ft_{n, k}(x) = \expecf{\Yspl | \Xspl} f_{n, k}(x) =  \sum_i w_{i, k}(x) f(X_i)$. 
We will bound the error of the estimate at a point $x$ in a standard way as 
\begin{align}
 \abs{f_{n, k}(x) - f(x)}^2 \leq  2\abs{f_{n, k}(x) - \ft_{n, k}(x)}^2 + 2 \abs{\ft_{n, k}(x) - f(x)}^2. \label{eq:biasvar}
\end{align}

Theorem \ref{theo:fixedk} is therefore obtained by combining bounds on the above two r.h.s terms (variance and bias). 
These terms are bounded separately in Lemma \ref{lem:bias} and Lemma \ref{lem:variance} below.

\subsection{Local rates for fixed $k$:  bias and variance  at $x$}
In this section we bound the bias and variance terms of equation (\ref{eq:biasvar}) with high probability, 
uniformly over $x\in \X$. We will need the following lemma which follows easily from standard VC theory \cite{VC:72}
results. The proof is given as supplement in the appendix.

\begin{lemma}
 \label{cor:VC}
Let $\B$ denote the class of balls on $\X$, with VC-dimension $\V_\B$. Let $0<\delta<1$, and define $\alpha_n =\paren{\V_\B\ln {2n}  + \ln ({8}/{\delta})}/{n}$.
The following holds with probability at least $1-\delta$ for all balls in $\B$. Pick any $a\geq \alpha_n$.  
Then  $\mu(B) \geq 3a \implies \mu_n(B)\geq a$ and $\mu_n(B) \geq 3a \implies \mu(B)\geq a$.
\end{lemma}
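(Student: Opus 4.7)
The plan is to derive this lemma as a direct application of a standard relative VC deviation inequality for the class $\B$ of balls. Sauer's lemma bounds its shatter coefficient at sample size $2n$ by $(2n)^{\V_\B}$, which is what produces the $\V_\B \ln(2n)$ term inside $\alpha_n$; the $\ln(8/\delta)$ reflects the usual Vapnik prefactor ($4$) together with splitting the allowed failure probability $\delta/2$ evenly between the two directions of deviation.

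The specific tool I would invoke is Vapnik's relative deviation bound in the form normalized by $\sqrt{\mu_n(B)}$: with probability at least $1-\delta$, simultaneously for every $B\in\B$,
\begin{equation*}
\mu(B) - \mu_n(B) \leq 2\sqrt{\mu_n(B)\,\alpha_n} \quad\text{and}\quad \mu_n(B) - \mu(B) \leq 2\sqrt{\mu(B)\,\alpha_n},
\end{equation*}
obtained by union-bounding the two symmetric one-sided Vapnik bounds at level $\delta/2$ each, and by using the Sauer-lemma bound on the shatter coefficient to absorb the exponent into $n\alpha_n$.

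Given these, the two implications fall out by a short quadratic solve. For the first, assume $\mu(B) \geq 3a$ with $a \geq \alpha_n$. Treating the first displayed inequality as a quadratic inequality in $\sqrt{\mu_n(B)}$ and taking the positive root gives $\sqrt{\mu_n(B)} \geq \sqrt{\mu(B) + \alpha_n} - \sqrt{\alpha_n}$, so the claim $\mu_n(B) \geq a$ reduces to $\sqrt{\alpha_n + 3a} \geq \sqrt{\alpha_n} + \sqrt{a}$; squaring both sides shows this is equivalent to $a \geq \alpha_n$, which is exactly our hypothesis. The second implication follows by the mirror argument applied to the companion inequality, with the roles of $\mu$ and $\mu_n$ interchanged.

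The main obstacle is matching the constants so that the factor of precisely $3$ in the lemma statement comes out cleanly. The more commonly cited $\sqrt{\mu(B)}$-normalized form of Vapnik's bound (with leading constant $2$) narrowly fails to deliver the factor of $3$ directly, since $1 - 2/\sqrt{3} < 0$, whereas the $\sqrt{\mu_n(B)}$-normalized form above passes through the ``$\sqrt{\alpha_n + 3a} \geq \sqrt{\alpha_n} + \sqrt{a}$'' identity and hits the constant $3$ on the nose. So the real work is identifying the correct formulation of Vapnik's bound from the VC-theory literature; once in hand, the rest of the proof is the two-line computation above.
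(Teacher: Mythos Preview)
Your approach is essentially the same as the paper's: invoke a relative (multiplicative) VC deviation bound for $\B$ and then do a short monotonicity/quadratic argument to extract the two factor-$3$ implications. The only cosmetic difference is the form of the bound used --- the paper quotes it as $\mu(B) \leq \mu_n(B) + \sqrt{\mu_n(B)\,\alpha_n} + \alpha_n$ (and symmetrically), then observes $3a - \alpha_n \geq 2a \geq a + \sqrt{a\alpha_n}$ and uses monotonicity of $t \mapsto t + \sqrt{t\alpha_n}$, whereas you complete the square on the variant with $2\sqrt{\mu_n(B)\,\alpha_n}$; both routes land on the same ``$a\geq \alpha_n$'' condition.
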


We start with the bias which is simpler to handle: it is easy to show that the bias of the estimate at $x$ depends on the 
radius $r_{k, n}(x)$. This radius can then be bounded, first in expectation using the doubling assumption on $\mu$, then
by calling on the above lemma to relate this expected bound to $r_{k, n}(x)$ with high probability. 

\begin{lemma}[Bias]
\label{lem:bias}
Suppose $\mu$ is $(C_0, d_0)$-maximally-homogeneous. Let $0<\delta<1$. 
With probability at least $1-\delta$ over the choice of $\Xspl$, the following holds 
 simultaneously for all $x\in \X$ and $k$ satisfying
$ n>k \geq \V_\B\ln {2n}  + \ln ({8}/{\delta})$.

Pick any $x\in \X$. Let $r>0$ satisfy $\mu(B(x, r))>3C_0 k/n$.
Suppose $\mu$ is $(C, d)$-homogeneous on $B(x, r)$, with $1\leq C\leq C_0$ and $1\leq d\leq d_0$.
We have:
 \begin{align*}
  \abs{\ft_{n, k}(x) - f(x)}^2 \leq \lambda^2  r^2 \paren{\frac{3C k}{n  \mu(B(x, r))}}^{2/d}. 
 \end{align*}

\end{lemma}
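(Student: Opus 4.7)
\textbf{Proof plan for Lemma \ref{lem:bias}.} My plan is to reduce the bias bound to a high-probability control on the $k$-NN radius $r_{k,n}(x)$, and then use the doubling structure together with Lemma \ref{cor:VC} to control that radius.

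First, I would observe that the weights $w_{i,k}(x)$ sum to $1$ and vanish whenever $\rho(x,X_i) > r_{k,n}(x)$, since $K$ is supported on $[0,1]$. Combining this with the $\lambda$-Lipschitz assumption on $f$, I get by Jensen's inequality (or directly)
\begin{align*}
\abs{\ft_{n,k}(x) - f(x)}
\;=\; \Bigl|\sum_i w_{i,k}(x)(f(X_i)-f(x))\Bigr|
\;\leq\; \lambda \sum_i w_{i,k}(x)\,\rho(x,X_i)
\;\leq\; \lambda\, r_{k,n}(x).
\end{align*}
So the whole task becomes bounding $r_{k,n}(x)^2$ by $r^2 \paren{3Ck/(n\mu(B(x,r)))}^{2/d}$ uniformly in $x$ and $k$.

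Next, I would pick the candidate radius
$r_0 \;:=\; r\paren{\tfrac{3Ck}{n\,\mu(B(x,r))}}^{1/d}$
and note that the assumption $\mu(B(x,r))>3C_0 k/n \geq 3Ck/n$ forces $r_0 < r$. Applying the $(C,d)$-homogeneity of $\mu$ on $B(x,r)$ with $r'=r$ and $\epsilon = r_0/r \in (0,1)$ yields
\begin{align*}
\mu(B(x,r)) \;\leq\; C\,(r/r_0)^d\,\mu(B(x,r_0)),
\end{align*}
which after substituting the definition of $r_0$ gives $\mu(B(x,r_0)) \geq 3k/n$. This is exactly the hypothesis needed to invoke Lemma \ref{cor:VC} with $a = k/n$, which is valid precisely because $k \geq \V_\B\ln 2n + \ln(8/\delta)$, i.e.\ $k/n \geq \alpha_n$. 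The lemma delivers a single high-probability event (probability $\geq 1-\delta$) on which $\mu(B) \geq 3a \Rightarrow \mu_n(B)\geq a$ holds for every ball $B \in \B$ simultaneously, so uniformity in $x$ and $k$ is automatic.

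On this good event, $\mu_n(B(x,r_0)) \geq k/n$, i.e.\ the ball $B(x,r_0)$ contains at least $k$ sample points, forcing $r_{k,n}(x) \leq r_0$. Squaring and combining with the Lipschitz bound above finishes the proof. The only real step of substance is turning the mass inequality $\mu(B(x,r_0))\geq 3k/n$ into the sample inequality via Lemma \ref{cor:VC}; the rest is bookkeeping. The mild obstacle worth flagging is checking that the chosen $r_0$ lies inside the range where the $(C,d)$-homogeneity assumption applies (i.e.\ $r_0 \leq r$), which is exactly what the hypothesis $\mu(B(x,r))>3C_0 k/n$ is designed to ensure.
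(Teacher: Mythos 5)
Your proposal is correct and follows essentially the same argument as the paper: reduce the bias to $\lambda r_{k,n}(x)$ via the Lipschitz property and weights summing to one, then pick the scaled radius $r_0=\epsilon r$ so that $(C,d)$-homogeneity forces $\mu(B(x,r_0))\geq 3k/n$, and invoke Lemma~\ref{cor:VC} (with $a=k/n\geq\alpha_n$) to get $\mu_n(B(x,r_0))\geq k/n$ and hence $r_{k,n}(x)\leq r_0$ uniformly over all balls. The only cosmetic difference is that you parametrize by $r_0$ rather than by $\epsilon$ as in the paper.
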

\begin{proof}

First fix $\Xspl$, $x\in \X$ and $k\in [n]$. We have:
\begin{align}
 \abs{\ft_{n, k}(x) - f(x)} &= \abs{ \sum_i w_{i, k}(x) \paren{f(X_i) - f(x)}} \leq  \sum_i w_{i, k}(x) \abs{f(X_i) - f(x)\nonumber}\\
&\leq \sum_i w_{i, k}(x) \lambda \distc{X_i, x} \leq \lambda r_{k, n}(x).\label{eq:bias}
\end{align}
We therefore just need to bound $r_{k, n}(x)$. We proceed as follows. 

Fix $x\in \X$ and $k$ and pick any $r>0$ such that $\mu(B(x, r))> 3C_0 k/n$. 
Suppose $\mu$ is $(C, d)$-homogeneous on $B(x, r)$, with $1\leq C\leq C_0$ and $1\leq d\leq d_0$.
Define 

$$\epsilon \doteq \paren{\frac{3Ck}{n \mu(B(x, r))}}^{1/d},$$ 

so that $\epsilon < 1$ by the bound on $\mu(B(x, r))$; then by the local doubling assumption on $B(x, r)$, 
we have $\mu(B(x, \epsilon r)) \geq C^{-1} \epsilon^d\mu(B(x,  r)) \geq 3 k/n$. Let $\alpha_n$ as defined in Lemma \ref{cor:VC}, 
and assume $k/n \geq \alpha_n$ (this is exactly the assumption on $k$ in the lemma statement). By Lemma \ref{cor:VC}, it follows that 
with probability at least $1-\delta$ uniform over $x$, $r$ and $k$ thus chosen, we have 
$\mu_n((B(x, \epsilon r))\geq k/n$ implying that $r_{k, n}(x) \leq \epsilon r$. 
We then conclude with the lemma statement by using equation (\ref{eq:bias}).
\end{proof}

\begin{lemma}[Variance]
Let $0<\delta<1$. With probability at least $1-2\delta$ over the choice of $(\Xspl, \Yspl)$, the following then holds  simultaneously for all $x\in \X$ and 
$k\in [n]$:

 \begin{align*}
\abs{f_{n, k}(x) - \ft_{n, k}(x)}^2 \leq\frac{K(0)}{K(1)}\cdot \frac{\V_\B\cdot t_Y^2(\delta/2n)\cdot \ln(2n/\delta) + \sigma^2_Y}{  k }.
 \end{align*}
\label{lem:variance}
\end{lemma}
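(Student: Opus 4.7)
The plan is to write the error as a conditionally zero-mean weighted sum of noise, apply Bernstein's inequality after truncating the noise, and pass from a pointwise-in-$x$ bound to a uniform one via a VC-type discretization of the $k$-NN sets.

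First I would set $\eta_i \doteq Y_i - f(X_i)$ so that
\[
 f_{n,k}(x) - \ft_{n,k}(x) \;=\; \sum_i w_{i,k}(x)\,\eta_i.
\]
Conditional on $\Xspl$, the $\eta_i$'s are independent, zero mean (since $f(x) = \expec[Y\mid X=x]$), and have variance at most $\sigma_Y^2$. The key structural fact is a uniform bound on the weights coming from the kernel assumptions: because $K$ is non-increasing with $K(1)>0$ and vanishes above $1$, the denominator of $w_{i,k}(x)$ is at least $k\,K(1)$ (each of the $k$ nearest neighbors contributes at least $K(1)$), while each numerator is at most $K(0)$. Hence $\max_i w_{i,k}(x) \leq K(0)/(k\,K(1))$, and since $\sum_i w_{i,k}(x) = 1$, also $\sum_i w_{i,k}^2(x) \leq K(0)/(k\,K(1))$, both uniformly in $x$ and in $\Xspl$.

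Next I would truncate the noise: set $T \doteq t_Y(\delta/2n)$, so that by a union bound $\max_i \abs{\eta_i} \leq T$ with probability at least $1-\delta/2$. On this event, each summand $\abs{w_{i,k}(x)\,\eta_i}$ is bounded by $T\,K(0)/(k\,K(1))$ while the conditional variance of the weighted sum is at most $\sigma_Y^2\,K(0)/(k\,K(1))$. Bernstein's inequality applied conditionally on $\Xspl$ then bounds $\abs{f_{n,k}(x)-\ft_{n,k}(x)}^2$, for each fixed $x$ and $k$, by a quantity of order $(K(0)/K(1))\cdot\bigl(\sigma_Y^2 \ln(1/\delta')/k + T^2 \ln^2(1/\delta')/k^2\bigr)$ with probability at least $1-\delta'$.

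For the required uniformity in $x$ and $k$, I would invoke the VC property of $\B$: for any fixed $\Xspl$, the subset $\Xspl \cap B(x, r_{k,n}(x))$ takes at most $n^{\V_\B}$ distinct values as $x$ ranges over $\X$ (Sauer's lemma). Crucially, the Bernstein bound above depends on the weights only through the two uniform quantities $\max_i w_{i,k}(x)$ and $\sum_i w_{i,k}^2(x)$, neither of which depends on $x$, so the bound is the same over all $x$ sharing a common $k$-NN set. A union bound over these $O(n^{\V_\B})$ subsets and the $n$ possible values of $k$, with $\delta' = \delta/(2n^{\V_\B+1})$, then replaces $\ln(1/\delta')$ by $O(\V_\B \ln(2n/\delta))$; the quadratic-in-$\ln$ Bernstein correction is dominated by the linear term under the hypothesis on $k$, yielding the claimed bound.

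The main obstacle is this final passage from a pointwise to a uniform statement: the weights $w_{i,k}(x)$ vary continuously within a cell, so one cannot directly union-bound Bernstein over the uncountable set $\X$. The resolution is the observation that Bernstein's parameters ($\max_i w_{i,k}$, $\sum_i w_{i,k}^2$, the truncation $T$, and the variance bound $\sigma_Y^2$) are genuinely $x$-independent, so the deviation bound holds uniformly over each cell once one has union-bounded over the polynomially many cells produced by the VC argument.
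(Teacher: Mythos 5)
Your overall plan (weighted sum of centered noise, truncate the noise at $t_Y(\delta/2n)$, concentrate conditionally on $\Xspl$, union-bound over the $O(n^{\V_\B})$ distinct $k$-NN cells) is the same skeleton as the paper's. The substantive difference is the concentration tool and how the expectation is handled, and this is where your argument has two gaps.

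First, the paper does \emph{not} apply a concentration inequality to the truncated noise. It applies McDiarmid's inequality to $\psi(\Yspl_{x,k}) = \abs{\sum_i w_{i,k}(x)\eta_i}$ \emph{conditional on the truncation event}, and it centers around the conditional expectation $\expecf{\Y_{\delta_0}}\psi$, which it then relates back to the \emph{unconditional} expectation via $\expec\psi \geq \pr{\Y_{\delta_0}}\,\expecf{\Y_{\delta_0}}\psi \geq \tfrac{1}{2}\expecf{\Y_{\delta_0}}\psi$. The unconditional expectation is then bounded cleanly via Jensen and independence: $\paren{\expecf{\Yspl|\Xspl}\psi}^2 \leq \expecf{\Yspl|\Xspl}\psi^2 = \sum_i w_{i,k}^2(x)\,\expec\abs{\eta_i}^2 \leq \sigma_Y^2\sum_i w_{i,k}^2(x)$, with no log factor. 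You instead propose applying Bernstein \emph{on the truncation event}, but once you condition on $\{\max_i\abs{\eta_i}\leq T\}$ the variables $\eta_i$ are no longer zero-mean, and Bernstein requires centered summands. The residual mean shift is controllable (it is $O(\sigma_Y\sqrt{\delta/2n})$ per $\eta_i$ by Cauchy--Schwarz), but your proposal neither recenters nor argues the bias away; as written, the application of Bernstein is not justified. The paper's comparison-to-unconditional-expectation device sidesteps this entirely.

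Second, even granting Bernstein, you would not get the lemma's stated bound. Bernstein yields a deviation of order $\sqrt{V\ln(1/\delta')} + M\ln(1/\delta')$, hence $\abs{f_{n,k}-\ft_{n,k}}^2 \lesssim \sigma_Y^2\tfrac{K(0)}{K(1)k}\ln(1/\delta') + T^2\tfrac{K(0)^2}{K(1)^2 k^2}\ln^2(1/\delta')$, with $\ln(1/\delta')\sim\V_\B\ln(2n/\delta)$. Compare with the lemma's $\tfrac{K(0)}{K(1)k}\bigl(\V_\B\, t_Y^2\ln(2n/\delta) + \sigma_Y^2\bigr)$: the variance term $\sigma_Y^2/k$ in the lemma carries \emph{no} log factor (precisely because the paper gets it from the Jensen bound on $\expec\psi$, not from concentration), whereas your $\sigma_Y^2$ term would pick up an extra $\V_\B\ln(2n/\delta)$. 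Your remark that the quadratic-in-$\ln$ term ``is dominated under the hypothesis on $k$'' has no support: Lemma~\ref{lem:variance} imposes no lower bound on $k$ and is stated for all $k\in[n]$, so for small $k$ the $\ln^2/k^2$ term is genuinely larger than $\ln/k$. The Bernstein route therefore proves a weaker (though still serviceable) statement, not the lemma as given.

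As a minor remark, the worry you raise about the uncountable union over $x$ within a fixed $k$-NN cell is a real subtlety, but your resolution is not quite airtight: ``the Bernstein parameters are $x$-independent'' gives a \emph{uniform upper bound} on the parameters, but the \emph{events} $\{\abs{S_x}>t\}$ are different for different $x$ in the same cell, so uniform parameter bounds do not by themselves license replacing the union over $x$ by a union over cells. The paper's write-up treats $\psi$ as a function of the cell alone, which glosses over the same point; neither version fully closes it, but you should not present your observation as a resolution when it is not one.
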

\begin{proof}

First, condition on $\Xspl$ fixed. For any $x\in\X$, $k\in [k]$, let $\Yspl_{x, k}$ denote the subset of $\Yspl$ corresponding to points from 
$\Xspl$ falling in $B(x, r_{k, n}(x))$. For $\Xspl$ fixed, the number of such subsets $\Yspl_{x, k}$ is therefore at most 
the number of ways we can intersect balls in $\B$ with the sample $\Xspl$; this is in turn upper-bounded by $n^{\V_\B}$ as is well-known
in VC theory.

Let $\psi(\Yspl_{x, k})\doteq \abs{f_{n, k}(x) - \ft_{n, k}(x)}$. 
We'll proceed by showing that with high probability, for all $x\in \X$,  $\psi(\Yspl_{x, k})$ is close to its expectation, then we bound this expectation.

Let $\delta_0 \leq 1/2n$. We further condition on the event 
 $\Y_{\delta_0}$ that for all $n$ samples $Y_i$,  $\abs{Y_i-f(X_i)}\leq t_Y(\delta_0)$. By definition of $t_Y(\delta_0)$, 
the event $\Y_{\delta_0}$ happens with probability at least $1- n\delta_0 \geq 1/2$ . 
It follows that for any $x\in\X$ 
$$\expec\psi(\Yspl_{x, k})\geq\pr{\Y_{\delta_0}}\cdot \expecf{\Y_{\delta_0}}\psi(\Yspl_{x, k})\geq \frac{1}{2}\expecf{\Y_{\delta_0}}\psi(\Yspl_{x, k}) , $$
where $\expecf{\Y_{\delta_0}}[\cdot]$ denote conditional expectation under the event.
Let $\epsilon >0$, we in turn have 
\begin{align*}
 \pr{\exists x, k, \, \psi(\Yspl_{x, k}) > 2\expec \psi(\Yspl_{x, k}) + \epsilon} &\leq \pr{\exists x, k, \,\psi(\Yspl_{x, k}) > \expecf{\Y_{\delta_0}} \psi(\Yspl_{x, k}) + \epsilon}\\
&\leq \prf{\Y_{\delta_0}}{\exists x, k, \,\psi(\Yspl_{x, k}) > \expecf{\Y_{\delta_0}} \psi(\Yspl_{x, k}) + \epsilon} + n\delta_0.
\end{align*}
This last probability can be bounded by applying McDiarmid's inequality: changing any $Y_i$ value 
changes $\psi(\Yspl_{x, k})$ by at most $w_{i, k} \cdot t_Y(\delta_0)$ when we condition on the event $\Y_{\delta_0}$. This, followed by a union-bound
yields 
\begin{align*}
 \prf{\Y_{\delta_0}}{\exists x, k, \,\psi(\Yspl_{x, k}) > \expecf{\Y_{\delta_0}} \psi(\Yspl_{x, k}) + \epsilon} 
\leq n^{\V_\B}\exp\braces{-2\epsilon^2/t_Y^2(\delta_0)\sum_i w_{i, k}^2} .
\end{align*}

Combining with the above we get 
\begin{align*}
 \pr{\exists x\in \X, \, \psi(\Yspl_{x, k}) > 2\expec \psi(\Yspl_{x, k}) + \epsilon} 
\leq n^{\V_\B}\exp\braces{-2\epsilon^2/t_Y^2(\delta_0)\sum_i w_{i, k}^2} + n\delta_0.
\end{align*}
In other words, let  $\delta_0 = \delta/2n$, with probability at least $1-\delta$, for all $x\in\X$ and $k\in [n]$
\begin{align*}
 \abs{f_{n, k}(x) - \ft_{n, k}(x)}^2 &\leq 8\paren{\expecf{\Yspl|\Xspl}\abs{f_{n, k}(x) - \ft_{n, k}(x)}}^2 
+ {t_Y^2(\delta/2n)}\paren{\V_\B\ln(2n/\delta) \sum_i w_{i, k}^2}\\
&\leq 8\expecf{\Yspl|\Xspl}\abs{f_{n, k}(x) - \ft_{n, k}(x)}^2 + {t_Y^2(\delta/2n)}\paren{\V_\B\ln(2n/\delta) \sum_i w_{i, k}^2}, 
\end{align*}
where the second inequality is an application of Jensen's.

We bound the above expectation on the r.h.s. next. In what follows (second equality below) we use the fact 
that for i.i.d random variables $z_i$ with zero mean, $\expec \abs{\sum_i z_i}^2 = \sum_i \expec\abs{z_i}^2$. 
We have
\begin{align*}
\expecf{\Yspl| \Xspl}\abs{f_{n, k}(x) - \ft_{n, k}(x)}^2 &=  \expecf{\Yspl| \Xspl}\abs{\sum_i w_{i, k}(x) \paren{Y_i - f(X_i)}}^2\\
&= \sum_i w_{i, k}^2(x) \expecf{\Yspl| \Xspl}\abs{Y_i - f(X_i)}^2 \leq \sum_i w_{i, k}^2(x) \sigma^2_Y.
\end {align*}

Combining with the previous bound we get that, with probability at least $1-\delta$, for all $x$ and $k$,
\begin{align}
 \abs{f_{n, k}(x) - \ft_{n, k}(x)}^2 \leq \paren{\V_\B\cdot t_Y^2(\delta/2n)\cdot \ln(2n/\delta) + \sigma^2_Y} 
\cdot \sum_i w_{i, k}^2(x).\label{eq:var}
\end{align}
We can now bound $\sum_i w_{i, k}^2(x)$ as follows:
\begin{align*}
\sum_i w_{i, k}^2(x)&\leq \max_{i\in [n]} w_{i, k}(x) = \max_{i\in [n]}\frac{K\paren{\rho(x, x_i)/r_{k, n}(x)}}{\sum_j K\paren{\rho(x, x_j)/r_{k, n}(x)}}
\leq \frac{K(0)}{\sum_j K\paren{\rho(x, x_j)/r_{k, n}(x)}}\\
&\leq \frac{K(0)}{\sum_{x_j \in B(x, r_{k, n}(x))} K\paren{\rho(x, x_j)/r_{k, n}(x)}}
\leq \frac{K(0)}{K(1) k}.
\end{align*}
% To conclude the proof, we just need to bound this last denominator, with high probability over the choice of 
% the sample $\Xspl$, for all $x\in\X$ and 
% $k$ satisfying $ n>k \geq 9 C_0 2^{d_0}\paren{\V_\B\ln {2n}  + \ln ({8}/{\delta})}$. Let $\alpha_n$ as defined in Lemma \ref{cor:VC}, 
% and notice that by our choice of $k$, $k/n \geq 9C_02^{d_0}\alpha_n$. Thus, with probability at least $1-\delta$ uniform 
% over $x$ and $k$ chosen as above, $\mu(B(x, r_{k, n}(x))\geq k/3n$ since $\mu_n(B(x, r_{k, n}(x))\geq k/n$. Given our doubling assumption on $\mu$, we then have 
% $\mu(B(x, r_{k, n}(x)/2)\geq C_0^{-1}2^{-d_0} k/3n$, implying again by Lemma \ref{cor:VC} that 
% $\mu_n(B(x, r_{k, n}(x)/2)\geq C_0^{-1}2^{-d_0} k/9n$. That is, 
% $\sum_i w_{i, k}^2(x) \leq \paren{9 C_0 2^{d_0}K(0)}/\paren{K(1)\cdot k }.$
Plug this back into equation \ref{eq:var} and conclude.
\end{proof}

\subsection{Minimax rates for a doubling measure}
\label{sec:minimax}
The minimax rates of theorem \ref{theo:minimax} (proved in the appendix) are obtained as is commonly done by constructing a 
regression problem that reduces to the problem of binary classification (see e.g. \cite{S:60, S:61, GKKW:81}). 
Intuitively the problem of classification is hard in those instances where labels (say $-1, +1$) vary wildly over the 
space $\X$, i.e. close points can have different labels. We make the regression problem similarly hard. We will consider a 
class of candidate regression functions such that each function $f$ alternates between positive and negative in neighboring regions
($f$ is depicted as the dashed line below).
\begin{center}
 \includegraphics[width= 7cm]{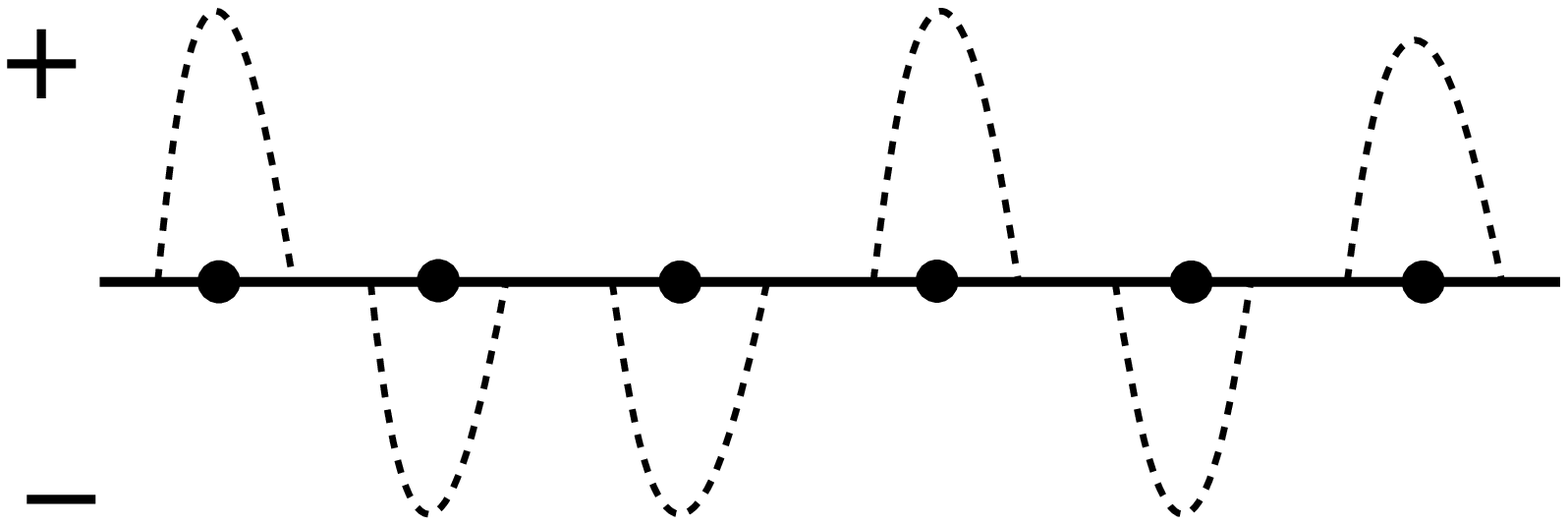}
\end{center}
The reduction relies on the simple observation that for a regressor $f_n$ to approximate the right $f$ from data it needs to 
at least identify the sign of $f$ in the various regions of space.
The more we can make each such $f$ change between positive and negative, the harder the problem. 
We are however constrained in how much $f$ changes since we also have to ensure that each $f$ is Lipchitz continuous. 
% Thus if $f$ is to be positive in some region and negative in another, these regions cannot be too close. We therefore have to break up the space $\X$ into a set of regions whose 
% centers are far enough to ensure smoothness of the candidates $f$, yet in a way that the set is large so that each 
% $f$ could be made to alternate a lot. We will therefore pick the centers of these regions as an $r$-net of $\X$
% for some appropriate choice of $r$; by definition the centers would be $r$ far apart and together they would 
% form an $r$-cover over space. As it turns out, any $r$-net is large under some tightness conditions on the expansion constants of $\mu$. 

\subsection{Choosing $k$ for near-optimal rates at $x$}
\begin{proof}[Proof of Theorem \ref{theo:optimalRates}]
Fix $x$ and let $r, d, C$ as defined in the theorem statement. 
Define 
\begin{align*}
 \kappa \doteq \theta_{n, \delta}^{d/(2+d)}\cdot \paren{\frac{n  \mu (B(x, r))}{3C} }^{2/(2+d)}
\text{ and } \epsilon \doteq \paren{\frac{3C\kappa}{n \mu(B(x, r))}}^{1/d}.
\end{align*}
Note that, by our assumptions,
\begin{align}
 \mu(B(x, r))>6 C \theta_{n, \delta} n^{-1/3} \geq6 C\theta_{n, \delta} n^{-d /(2+d)} =6 C\theta_{n, \delta} \frac{n^{2/(2+d)}}{n} \geq 6 C \frac{\kappa}{n}. \label{eq:mass_at_r}
\end{align}
The above equation (\ref{eq:mass_at_r}) implies $\epsilon < 1$. Thus, by the homogeneity 
assumption on $B(x, r)$, 
$\mu(B(x, \epsilon r))\geq C^{-1} \epsilon^d\mu(B(x,  r))\geq 3{\kappa}/{n}$.
% \begin{align*}
%  \mu(B(x, \epsilon r))&\geq C^{-1} \epsilon^d\mu(B(x,  r))\geq 3\frac{\kappa}{n}.
% \end{align*}
Now by the first inequality of (\ref{eq:mass_at_r}) we also have 
\begin{align*}
 \frac{\kappa}{n} \geq  \frac{\theta_{n, \delta}}{n} n^{4/(6+ 3d)} \geq \frac{\theta_{n, \delta}}{n} n^{4/(6+ 3d_0)}\geq \alpha_n,
\end{align*}
%$\kappa/n \geq  \theta_{n, \delta} n^{4/(6+ 3d)}/n \geq \alpha_n$, 
% \geq \frac{3}{n} \theta_{n, \delta} n^{4/(6+ 3d)}\text { (by the first inequality of (\ref{eq:mass_at_r}))} \\
% &\geq \frac{3}{n} \theta_{n, \delta} n^{4/(6+ 3d_0)}\geq 3\alpha_n,
where $\alpha_n =\paren{\V_\B\ln {2n}  + \ln ({8}/{\delta})}/{n}$ is as defined in Lemma \ref{cor:VC}. We can thus apply Lemma \ref{cor:VC} to have that, 
with probability at least $1-\delta$, $\mu_n(B(x, \epsilon r))\geq \kappa/n$. In other words, for any $k\leq \kappa$, $r_{k, n}(x) \leq \epsilon r$. It follows that 
if $k\leq \kappa$, 
\begin{align*}
 \frac{\Delta^2\cdot \theta_{n, \delta}}{k} \geq \frac{\Delta^2\cdot \theta_{n, \delta}}{\kappa} = \Delta^2\paren{\frac{3C\kappa}{n \mu(B(x, r))}}^{2/d} \geq (\epsilon r)^2 \geq r_{k, n}^2(x).
\end{align*}
Remember that the above inequality is exactly the condition on the choice of $k_1$ in the theorem statement. 
Therefore, suppose $k_1\leq \kappa$, it must be that $k_2 > \kappa$ otherwise $k_2$ is the highest integer satisfying the condition, contradicting 
our choice of $k_1$. Thus we have (\rm{i})
$\theta_{n, \delta}/{k_2}< \theta_{n, \delta}/{\kappa} = \epsilon^2.$ We also have (\rm{ii}) $r_{k_2, n}(x) \leq 2^{1/d}\epsilon r$. To see this, notice that since 
$k_1\leq \kappa < k_2 = k_1 + 1$ we have $k_2 \leq 2\kappa$; now by repeating the sort of argument above, 
we have $\mu(B(x, 2^{1/d}\epsilon r))\geq 6{\kappa}/{n}$ which by Lemma \ref{cor:VC} implies that $\mu_n(B(x, 2^{1/d}\epsilon r))\geq 2{\kappa}/{n}\geq k_2/n$.

Now suppose instead that $k_1> \kappa$, then by definition of $k_1$, we have (\rm{iii})
\begin{align*}
 r_{k_1, n}(x)^2 \leq  \frac{\Delta^2\cdot \theta_{n, \delta}}{k_1} \leq \frac{\Delta^2\cdot \theta_{n, \delta}}{\kappa} = (\Delta \epsilon )^2 .
\end{align*}
The following holds by (\rm{i}), (\rm{ii}), and (\rm{iii}). Let $k$ be chosen as in the theorem statement. 
Then, whether $k_1> \kappa$ or not, it is true that 
\begin{align*}
 \paren{\frac{\theta_{n, \delta}}{k} + r_{k, n}^2(x)}\leq \paren{1+ 4\Delta^2}\epsilon^2= \paren{1+4\Delta^2} \paren{\frac{3C\theta_{n, \delta}}{n  \mu (B(x, r))} }^{2/(2+d)}.
\end{align*}
Now combine Lemma \ref{lem:variance} with equation (\ref{eq:bias}) and we have that with probability at least $1-2\delta$ (accounting for all events discussed) 
\begin{align*}
 \abs{f_{n, k}(x) -f(x)}^2 &\leq \frac{2C_{n, \delta}}{\theta_{n, \delta}} \frac{\theta_{n, \delta}}{k} + 2\lambda^2 r_{k, n}^2(x)
\leq \paren{\frac{2C_{n, \delta}}{\theta_{n, \delta}} + 2\lambda^2} \paren{\frac{\theta_{n, \delta}}{k} + r_{k, n}^2(x)}\\
&\leq\paren{\frac{2C_{n, \delta}}{\theta_{n, \delta}} + 2\lambda^2}\paren{1+4\Delta^2}\paren{\frac{3C\theta_{n, \delta}}{n  \mu (B(x, r))} }^{2/(2+d)}.
\end{align*}
\end{proof}

\section{Final remark}
The problem of choosing $k=k(x)$ optimally at $x$ is similar to the problem of local bandwidth 
selection for kernel-based methods (see e.g.  \cite{S:86, bandwidthSelect1}), and our method for 
choosing $k$ might yield insights into bandwidth selection, since $k$-NN and kernel regression methods only differ 
in their notion of neighborhood of a query $x$.

\subsubsection*{Acknowledgments}
I am grateful to David Balduzzi for many useful discussions.

%\begin{small}
\bibliographystyle{unsrt}
 \bibliography{refs}
%\end{small}

\newpage
\appendix 
\section*{Appendix}
\section {Proof of Lemma \ref{cor:VC}}
\begin{lemma}[Relative VC bounds \cite{VC:72}]
Let $\B$ be a class of subsets of $\X$. Let $0< \delta < 1$. Suppose a sample $\Xspl$
of size $n$ is drawn independently at random from a distribution $\mu$ over $\X$
with resulting empirical distribution $\mu_n$. Define 
$\alpha_n =\paren{\V_\B\ln {2n}  + \ln ({8}/{\delta})}/{n}$.

Then with probability 
at least $1-\delta$ over the choice of $\Xspl$, all $B \in \B$ satisfy  
\begin{align*}
\mu(B) &\leq \mu_n(B) + \sqrt{ \mu_n(B)\alpha_n} + \alpha_n, \text{ and}\\
\mu_n(B) &\leq \mu(B) + \sqrt{\mu(B)\alpha_n} + \alpha_n.
\end{align*}
\label{lem:relativeVC}
\end{lemma}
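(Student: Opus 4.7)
The plan is to follow the classical symmetrization-plus-Sauer-Shelah route due to Vapnik and Chervonenkis, tailored so that the resulting deviation is scaled by a square root of the mass (the relative form) rather than being treated as absolute. First I would introduce a ghost sample $\Xspl' = (X_1', \dots, X_n')$ drawn i.i.d.\ from $\mu$ independently of $\Xspl$, with empirical distribution $\mu_n'$. The standard symmetrization lemma, adapted to the relative scaling, gives
\begin{equation*}
\pr{\sup_{B \in \B} \frac{\mu(B) - \mu_n(B)}{\sqrt{\mu(B) + \alpha_n}} > \epsilon} \leq 2\,\pr{\sup_{B \in \B} \frac{\mu_n'(B) - \mu_n(B)}{\sqrt{(\mu_n(B)+\mu_n'(B))/2 + \alpha_n}} > \epsilon/2},
\end{equation*}
and analogously in the reverse direction. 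The crucial point is that the right-hand event depends only on the combined sample $\Xspl \cup \Xspl'$.

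Next, conditioning on the multiset $\Xspl \cup \Xspl'$, I would exploit exchangeability via random swaps $(X_i, X_i') \leftrightarrow (X_i', X_i)$ indexed by i.i.d.\ signs $\varepsilon_i \in \{\pm 1\}$. The numerator in the symmetrized event then has the law of a Rademacher sum $\frac{1}{n}\sum_{i} \varepsilon_i \paren{\mathbf{1}_B(X_i) - \mathbf{1}_B(X_i')}$. For fixed $B$, Hoeffding's inequality yields a sub-Gaussian tail whose variance proxy is precisely the empirical mass of $B$ on the combined $2n$-sample --- the same quantity sitting in the denominator --- which is exactly the mechanism that produces $\sqrt{\mu(B)}$ (rather than a constant) in the final bound.

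The union bound over $\B$ costs the trace complexity of $\B$ on the $2n$ combined points, which is at most $(2n)^{\V_\B}$ by Sauer-Shelah. Choosing $\epsilon$ of order $\sqrt{(\V_\B \ln(2n) + \ln(8/\delta))/n}$ and absorbing the symmetrization factor of $2$ into a $\delta/2$ budget yields precisely $\alpha_n = (\V_\B \ln(2n) + \ln(8/\delta))/n$, so that the one-sided events
\begin{equation*}
\mu(B) - \mu_n(B) \leq \sqrt{(\mu(B)+\alpha_n)\alpha_n} \quad \text{and} \quad \mu_n(B) - \mu(B) \leq \sqrt{(\mu_n(B)+\alpha_n)\alpha_n}
\end{equation*}
hold uniformly in $B$ with probability at least $1-\delta$. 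A one-line application of $\sqrt{a+b}\leq \sqrt{a}+\sqrt{b}$ then converts these into the two inequalities in the statement.

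The main obstacle will be the relative symmetrization: the usual step of replacing $\mu(B)$ in the numerator by $\mu_n'(B)$ is not free when $\mu(B)$ is small, since the ghost empirical mass can then deviate multiplicatively from $\mu(B)$. This is precisely why the additive $\alpha_n$ must be carried inside the square root from the outset; the cleanest handling is a case split, using a direct binomial (Chernoff) tail when $\mu(B) < \alpha_n$, and the symmetrization argument above when $\mu(B) \geq \alpha_n$, so that in each regime the ghost empirical mass is within a constant factor of the true mass on the good event. The rest is routine concentration and counting.
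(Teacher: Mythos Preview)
The paper does not prove this lemma at all: it is stated as a known result, attributed to the Vapnik--Chervonenkis literature via the citation \cite{VC:72}, and is then used as a black box to derive the corollary that appears in the main text as Lemma~\ref{cor:VC}. So there is no ``paper's own proof'' to compare against.

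That said, your sketch is the standard route to these relative (multiplicative) VC inequalities and is essentially correct: ghost-sample symmetrization, conditioning on the combined $2n$-sample, Rademacher sign-swapping, a Hoeffding/Bernstein-type tail whose variance proxy is the empirical mass of $B$ (which is what produces the $\sqrt{\mu(B)\alpha_n}$ scaling rather than $\sqrt{\alpha_n}$), and a union bound over at most $(2n)^{\V_\B}$ realized sets via Sauer--Shelah. Your awareness of the delicate point---that in the relative form the symmetrization step requires care when $\mu(B)$ is small, handled either by carrying the additive $\alpha_n$ inside the square root or by a case split---is exactly right; this is where naive symmetrization would lose constants or fail outright. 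The only thing to watch is bookkeeping of the constants so that the final $\alpha_n$ comes out with exactly the $\ln(8/\delta)$ and $\V_\B\ln(2n)$ terms as stated; different presentations of the argument yield slightly different numerical factors, and the paper has simply adopted one particular clean form from the literature.
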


Lemma \ref{cor:VC} is then obtained as the following corollary to Lemma  \ref{lem:relativeVC} above.

\begin{corollary}
 Let $\B$ denote the class of balls on $\X$. Let $0<\delta<1$, and as in Lemma \ref{lem:relativeVC}
above, define $\alpha_n =\paren{\V_\B\ln {2n}  + \ln ({8}/{\delta})}/{n}$.

The following holds with probability at least $1-\delta$ for all balls in $\B$. Pick any $a\geq \alpha_n$.  
\begin{itemize}
 \item $\mu(B) \geq 3a \implies \mu_n(B) + \sqrt{\mu_n(B)\alpha_n}\geq  \mu(B) - \alpha_n\geq a + \sqrt{a \alpha_n } \implies \mu_n(B)\geq a$.
\item $\mu_n(B) \geq 3a \implies \mu(B) + \sqrt{\mu(B)\alpha_n}\geq  \mu_n(B)- \alpha_n\geq a + \sqrt{ a\alpha_n } \implies \mu(B)\geq a$.
\end{itemize}
\end{corollary}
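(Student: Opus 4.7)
The plan is to derive the corollary as a direct algebraic consequence of Lemma \ref{lem:relativeVC}, which is invoked once and then applied term-by-term. Since both bullets are structurally symmetric, I would prove the first and indicate that the second follows by swapping the roles of $\mu$ and $\mu_n$ using the second inequality of Lemma \ref{lem:relativeVC}.

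Concretely, I would first condition on the event of probability at least $1-\delta$ on which both inequalities of Lemma \ref{lem:relativeVC} hold simultaneously for every $B\in\B$; no additional union bound is needed since both inequalities come out of the same VC event. Working on this event, I would fix any $a\geq \alpha_n$ and any ball $B$ with $\mu(B)\geq 3a$. Rearranging the first inequality of Lemma \ref{lem:relativeVC} gives $\mu_n(B) + \sqrt{\mu_n(B)\alpha_n} \geq \mu(B) - \alpha_n$, which is the first written inequality in the bullet. Then I would use $\mu(B)\geq 3a$ and $\alpha_n \leq a$ to lower-bound $\mu(B)-\alpha_n \geq 3a - a = 2a$, and use $\sqrt{a\alpha_n}\leq a$ (again from $\alpha_n\leq a$) to observe $a + \sqrt{a\alpha_n} \leq 2a$. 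Chaining these gives $\mu_n(B) + \sqrt{\mu_n(B)\alpha_n} \geq a + \sqrt{a\alpha_n}$.

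The final implication $\mu_n(B)\geq a$ follows because the function $\varphi(t) = t + \sqrt{t\alpha_n}$ is strictly increasing in $t\geq 0$; hence $\varphi(\mu_n(B)) \geq \varphi(a)$ forces $\mu_n(B)\geq a$. The second bullet is proved identically, starting from the second inequality of Lemma \ref{lem:relativeVC}, i.e.\ $\mu_n(B)\leq \mu(B) + \sqrt{\mu(B)\alpha_n} + \alpha_n$, and running the same chain with $\mu$ and $\mu_n$ swapped.

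There is no serious obstacle here: the only thing to be careful about is the monotonicity step and the use of $\alpha_n \leq a$ in two places (once to absorb the additive $\alpha_n$ into the $3a$ threshold, and once to bound $\sqrt{a\alpha_n}\leq a$). The two-sided form of Lemma \ref{lem:relativeVC} is exactly what makes both directions work on the same high-probability event, so the proof is effectively a short rearrangement argument rather than any new probabilistic estimate.
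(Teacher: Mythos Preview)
Your proposal is correct and matches the paper's approach exactly: the paper gives no separate proof but embeds the same chain of implications directly in the corollary's bullets, and your write-up simply spells out the two uses of $a\geq\alpha_n$ and the monotonicity of $t\mapsto t+\sqrt{t\alpha_n}$ that justify those implications. Nothing is missing.
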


\section{Proof of Theorem \ref{theo:minimax}}
The minimax rates shown here are obtained as is commonly done by constructing a 
regression problem that reduces to the problem of binary classification (see e.g. \cite{S:60, S:61, GKKW:81}). 
Intuitively the problem of classification is hard in those instances where labels (say $-1, +1$) vary wildly over the 
space $\X$, i.e. close points can have different labels. We make the regression problem similarly hard. We will consider a 
class of candidate regression functions such that each function $f$ alternates between positive and negative in neighboring regions
($f$ is depicted as the dashed line below).
\begin{center}
 \includegraphics[width= 7cm]{LowerBound2.eps}
\end{center}
The reduction relies on the simple observation that for a regressor $f_n$ to approximate the right $f$ from data it needs to 
at least identify the sign of $f$ in the various regions of space.
The more we can make each such $f$ change between positive and negative, the harder the problem. 
We are however constrained in how much $f$ changes since we also have to ensure that each $f$ is Lipchitz continuous. 
Thus if $f$ is to be positive in some region and negative in another, these regions cannot be too close. We therefore have to break up the space $\X$ into a set of regions whose 
centers are far enough to ensure smoothness of the candidates $f$, yet in a way that the set is large so that each 
$f$ could be made to alternate a lot. We will therefore pick the centers of these regions as an $r$-net of $\X$
for some appropriate choice of $r$; by definition the centers would be $r$ far apart and together they would 
form an $r$-cover over space. As it turns out, any $r$-net is large under some tightness conditions on the expansion constants of $\mu$. 

We start with the following lemma which upper and lower bounds the size of an $r$-net 
under some tightness conditions on the doubling behavior of $\mu$. Results of this type appear in 
different forms in the literature. In particular similar upper-bounds on $r$-net size are discussed in 
\cite{C:74, LS115}. Here, we are mainly interested in the lower-bound on $r$-net size but show both upper and lower bounds for completion.
Both upper and lower bounds relie on union-bounds over sets of balls centered on a net.  
\begin{lemma}
Let $\mu$ be a measure on $\X$ such that 
for all $x\in \X$, for all $r>0$ and $0<\epsilon<1$, $$C_1 \epsilon^{-d}\mu(B(x, \epsilon r))\leq \mu(B(x, r)) \leq C_2 \epsilon^{-d}\mu(B(x, \epsilon r)),$$
where $C_1$, $C_2$ and $d$ are positive constants independent of $x$, $r$, and $\epsilon$.
Then, there exist $C_1', C_2'$ such that, for all $x\in \X$, for all $r>0$ and $0<\epsilon<1$,
  an $(\epsilon r)$-net of $B(x, r)$ has size at least 
$C_1' \epsilon^{-d}$ and at most $C_2' \epsilon^{-d}$. 
% \begin{enumerate}[(a)]
%  \item Suppose there exists positive constants $C_1,d$, such that for all $x\in \X$, for all $r>0$ and $0<\epsilon<1$, 
% $\mu(B(x, r))\geq C_1 \epsilon^{-d}\mu(B(x, \epsilon r))$, then there exists $C_1'$ such that for 
% all $x\in \X$, for all $r>0$ and $0<\epsilon<1$, an $(\epsilon r)$-net of $B(x, r)$ has size at least 
% $C_1' \epsilon^{-d}$.
% \item Suppose there exists positive constants $C_2,d$, such that for all $x\in \X$, for all $r>0$ and $0<\epsilon<1$, 
% $\mu(B(x, r))\leq C_2 \epsilon^{-d}\mu(B(x, \epsilon r))$, then there exists $C_2'$ such that for 
% all $x\in \X$, for all $r>0$ and $0<\epsilon<1$, an $(\epsilon r)$-net of $B(x, r)$ has size at most
% $C_2' \epsilon^{-d}$. 
% \end{enumerate}
\label{lem:rnetsize}
\end{lemma}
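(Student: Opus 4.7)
The plan is to prove both bounds by the standard packing/covering dichotomy, using the tight two-sided doubling hypothesis to compare masses of small balls centered at net points with $\mu(B(x,r))$. Let $N \subseteq B(x,r)$ be the given (maximal) $(\epsilon r)$-separated set. Two structural facts I will use repeatedly are: (a) by maximality, the balls $\{B(z,\epsilon r)\}_{z \in N}$ cover $B(x,r)$; and (b) by $(\epsilon r)$-separation, the balls $\{B(z, \epsilon r/2)\}_{z \in N}$ are pairwise disjoint and, since $\epsilon < 1$ and each $z \in B(x,r)$, all lie inside $B(x,2r)$.

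For the upper bound I would use the packing view. From (b),
\[
\sum_{z \in N} \mu(B(z,\epsilon r/2)) \;\leq\; \mu(B(x,2r)).
\]
The upper doubling inequality applied at $x$ with ratio $1/2$ gives $\mu(B(x,2r)) \leq C_2 2^d \mu(B(x,r))$. To lower bound each summand, note that $z \in B(x,r)$ implies $B(z,2r) \supseteq B(x,r)$, and the upper doubling inequality applied at $z$ with radius $2r$ and ratio $\epsilon/4$ yields $\mu(B(z,\epsilon r/2)) \geq C_2^{-1}(\epsilon/4)^d \mu(B(z,2r)) \geq C_2^{-1}(\epsilon/4)^d \mu(B(x,r))$. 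Combining gives $|N| \leq C_2^2 \cdot 8^d \epsilon^{-d}$, so $C_2' = C_2^2 \cdot 8^d$ works.

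For the lower bound I would use the covering view. From (a),
\[
\mu(B(x,r)) \;\leq\; \sum_{z \in N} \mu(B(z,\epsilon r)).
\]
The task is to upper bound each summand by something proportional to $\epsilon^d \mu(B(x,r))$. For this I would first invoke the lower doubling inequality (the $C_1$ side) at $z$ with radius $4r$ and ratio $\epsilon/4$: $\mu(B(z,\epsilon r)) \leq C_1^{-1}(\epsilon/4)^d \mu(B(z,4r))$. Then a short chain using the upper doubling inequality bounds $\mu(B(z,4r))$ against $\mu(B(z,r))$ (ratio $1/4$) and then $\mu(B(z,r)) \leq \mu(B(x,2r)) \leq C_2 2^d \mu(B(x,r))$, giving $\mu(B(z,4r)) \leq C_2^2 \cdot 8^d \mu(B(x,r))$. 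Substituting and summing produces $|N| \geq C_1 C_2^{-2} \cdot 2^{-d} \epsilon^{-d}$.

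The main obstacle is not conceptual but rather ensuring the constants line up: the lower bound on $|N|$ is the new ingredient, and it essentially depends on the $C_1$ half of the two-sided hypothesis, which turns lower doubling into the upper bound $\mu(B(z,\epsilon r)) \lesssim \epsilon^d \mu(B(z, O(r)))$ needed to make the covering argument produce an $\epsilon^{-d}$ lower bound rather than something trivial. Once that observation is made, everything else is routine radius bookkeeping.
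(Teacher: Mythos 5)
Your proof is correct and follows the same packing-versus-covering strategy as the paper: for the upper bound, disjointness of the $B(z,\epsilon r/2)$ inside $B(x,2r)$ plus the lower bound $\mu(B(z,\epsilon r/2)) \gtrsim \epsilon^d\mu(B(x,r))$ coming from $B(z,2r)\supseteq B(x,r)$; for the lower bound, the cover $\bigcup_z B(z,\epsilon r)\supseteq B(x,r)$ plus the upper bound $\mu(B(z,\epsilon r))\lesssim \epsilon^d\mu(B(x,r))$ coming from the $C_1$ side. The only cosmetic difference is in the lower-bound bookkeeping: the paper applies the $C_1$-inequality at $z$ directly at radius $r$ (giving $\mu(B(z,\epsilon r))\leq C_1^{-1}\epsilon^d\mu(B(z,r))\leq C_1^{-1}\epsilon^d\mu(B(x,2r))$ and then $\mu(B(x,r))\geq C_1 2^{-d}\mu(B(x,2r))$), using only $C_1$; you take a longer chain through $B(z,4r)$ that also invokes the $C_2$ side and yields a slightly weaker constant $C_1 C_2^{-2}2^{-d}$ instead of $C_1^2 2^{-d}$. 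Both are valid and the extra detour costs nothing beyond a worse constant.
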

\begin{proof}
Fix $B(x, r)$, and consider an $(\epsilon r)$-net $Z$ of $B(x, r)$. 
First we handle the upper-bound on $\abs{Z}$. 

Since, by a triangle inequality, we have for any $z\in Z$, $B(x, r) \subset B(z, 2r)$, 
it follows by the assumption on $\mu$ that, 
\begin{align*}
 \mu\paren{B\paren{z, \frac{\epsilon}{2} r}}\geq C_2^{-1} 4^{-d} \epsilon^d\mu\paren{B(z, 2r)}
\geq C_2^{-1} 4^{-d} \epsilon^d \mu\paren{B(x, r)}.
\end{align*}
Now, since for any such $z$, $B\paren{z, \frac{\epsilon}{2} r}\subset B(x, 2r)$, and the balls $B\paren{z, \frac{\epsilon}{2} r}$ for $z\in Z$ are disjoint (centers are $\epsilon r$ apart) we have 
\begin{align*}
 \abs{Z} C_2^{-1} 4^{-d} \epsilon^d \mu\paren{B(x, r)} \leq \mu\paren{ {\bigcup_{z\in Z}B\paren{z, \frac{\epsilon}{2} r}}}
\leq \mu(B(x, 2r)) \leq C_22^d \mu(B(x, r)), 
\end{align*}
implying that $\abs{Z}\leq C_2^2 8^d \epsilon^{-d}$.

The lower-bound on $\abs{Z}$ is handled similarly as follows. We have for all $z\in Z$
\begin{align*}
 \mu(B(z, \epsilon r)) \leq C_1^{-1} \epsilon^{d} \mu(B(z, r)) \leq C_1^{-1} \epsilon^{d} \mu(B(x, 2r)).
\end{align*}
Thus, applying a union-bound we obtain
\begin{align*}
 \abs{Z}C_1^{-1} \epsilon^{d} \mu(B(x, 2r)) \geq \mu\paren{ {\bigcup_{z\in Z}B\paren{z, \epsilon r}}}
\geq \mu(B(x, r)) \geq C_1 2^{-d} \mu(B(x, 2r)),
\end{align*}
implying that $\abs{Z}\geq C_1^{2} 2^{-d} \epsilon^{d}$.
\end{proof}

The proof of the minimax theorem follows.
\begin{proof}[Proof of Theorem \ref{theo:minimax}]
In what follows, for any function $g:\X \mapsto \real$, define $\norm{g}^2 = \expec_{x}\abs{g(x)}^2$, i.e. the squared norm of $g$ in $L^2(\mu)$.

Let $r_n = (\lambda^2 n)^{-1/(2+d)}$, and let $Z$ be an $r_n$-net of $\X$. Define $\tau=\min\braces{\frac{1}{3}C_1^{1/d}, \frac{1}{4}}$. 
For every $z\in Z$ we define the following function 
$$ f_z (x) \doteq \frac{\lambda}{5}\paren{\tau r_n -  \distc{x, z}}_+ \text{ i.e. } f_z(x) = 0 \text{ whenever } 
\distc{x, z}\geq \tau r_n.$$ It is easy to verify that for all $x, x'\in \X$
\begin{align*}
 \abs{f_z(x) - f_z(x')}\leq \frac{\lambda}{5}\abs{\distc{x, z} - \distc{x', z}} \leq \frac{\lambda}{5}\distc{x, x'}, 
\end{align*}
that is, $f_z$ is $\frac{\lambda}{5}$-Lipschitz. 

Now consider the random vector $\varsigma = \braces{\varsigma_z}_{z\in Z}$ where the coordinates 
$\varsigma_z\in \braces{-1, 1}$ are independent Bernoulli r.v.s that are $1$ with probability $1/2$. For 
every possible value of $\varsigma$ define 
$$f_\varsigma(x) \doteq \sum_{z\in Z}\varsigma_z g_z(x).$$
Next we verify that $f_\varsigma$ is $\lambda$-Lipschitz. First pick $x$ and $x'$ from the same ball 
$B(z, \tau r_n)$. It is clear that 
\begin{align*}
 \abs{f_\varsigma(x) - f_\varsigma(x')} = \abs{g_z(x) - g_z(x')}\leq \lambda\distc{x, x'}.
\end{align*}
Now suppose $x$ or $x'$ or both are outside all balls $\braces{B(z, \tau r_n)}_{z\in Z}$, then again it is easy to see that 
$\abs{f_\varsigma(x) - f_\varsigma(x')} \leq \lambda\distc{x, x'}.$ We now check the final case where $x\in B(z, \tau r_n)$ and 
$x' \in B(z', \tau r_n)$, $z\neq z'$. To this end, first notice that the ring $B(z, r_n/2)\setminus B(z, \tau r_n)$ is non-empty since 
$\mu(B(z, r_n/2))\geq C_1(2\tau)^{-d}\mu(B(z, \tau r_n)) > \mu(B(z, \tau r_n))$. 
Pick $x''$ in this ring, and notice that $x''$ is outside both balls $B(z, \tau r_n)$ and $B(z', \tau r_n)$. Thus, 
$g_z(x'') = g_z'(x'') = 0$, and we can write
\begin{align*}
 \abs{f_\varsigma(x) - f_\varsigma(x')} &= \abs{g_z(x) - g_z'(x')} \leq  \abs{g_z(x) - g_z(x'')} + \abs{g_z'(x'') - g_z'(x')}\\
&\leq \frac{\lambda}{5}\paren{\distc{x, x''} + \distc{x'', x'}}\leq \frac{\lambda}{5}\paren{2\distc{x,x''} + \distc{x, x'}}\\
&\leq \frac{\lambda}{5}\paren{2r_n + \distc{x, x'}}\leq \frac{\lambda}{5}\paren{4\distc{x, x'} + \distc{x, x'}}= \lambda\distc{x, x'}.
\end{align*}
At this point we can define $\mathcal{D}$ as the class of distributions on $\X\times \Y$, where $X\sim \mu$ and 
$Y = f_\varsigma(X) + \mathcal{N}(0, 1)$, for some $f_\varsigma$ as constructed above. Clearly 
$\mathcal{D}\subset \mathcal{D}_{\mu, \lambda}$ and we just have to show that
$$\inf_{\braces{f_n}}\sup_{\mathcal{D}}\frac{\expec\norm{f_n - f_\varsigma}^2}{\lambda^2 r_n^2}\geq O(1).$$

Fix a regressor $f_n$, that is $f_n$ maps any sample $(\Xspl, \Yspl)$ to a function in $L^2(\mu)$, which, 
for simplicity of notation, we also denote by $f_n$. For $(\Xspl, \Yspl)$ fixed, we denote by 
$f_{n, Z}$ the projection (in the Hilbert space $L^2(\mu)$) of $f_n$ onto the orthonormal system 
$\braces{f_z/\norm{f_z}}_{z\in Z}$. In other words, 
$f_{n, Z} = \sum_{z\in Z}\frac{<f_n, f_z>}{\norm{f_z}^2}f_z= \sum_{z\in Z}w_{n, z} f_z.$ Thus, for $f_\varsigma$ fixed, we
have 
\begin{align*}
 \expecf{\Xspl, \Yspl}\norm{f_n - f_\varsigma}^2 \geq  \expecf{\Xspl, \Yspl}\norm{f_{n, Z} - f_\varsigma}^2
= \expecf{\Xspl, \Yspl}\sum_{z\in Z}\paren{w_{n, Z}-\varsigma_z}^2\norm{f_z}^2.
\end{align*}
To bound $\norm{f_z}$, notice that $f_z$ is at least $\lambda\tau r_n/10$ on the ball $B(z, \tau r_n/2)$. This ball 
in turn has mass at least $C_2^{-1}(\tau r_n/2)^{d}$, so $\norm{f_z}\geq C_3 /\sqrt{n}$, for $C_3$ appropriately chosen.
We therefore have 
\begin{align*}
\sup_{\mathcal{D}} \expecf{\Xspl, \Yspl}\norm{f_n - f_\varsigma}^2 
&\geq \frac{C_3^2}{n} \sup_{\mathcal{D}} \expecf{\Xspl, \Yspl}\sum_{z\in Z}\paren{w_{n, z}-\varsigma_z}^2
\geq \frac{C_3^2}{n} \sup_{\mathcal{D}} \expecf{\Xspl, \Yspl}\sum_{z\in Z}\ind{w_{n, z}\cdot \varsigma_z < 0}\\
&\geq \frac{C_3^2}{n} \,\displaystyle\expecf{\varsigma}\expecf{\Xspl, \Yspl}\sum_{z\in Z}\ind{w_{n, z}\cdot \varsigma_z < 0}
\geq \frac{C_3^2}{n} \expecf{\Xspl}\sum_{z\in Z}\expecf{Y, \varsigma|\Xspl}\ind{w_{n, z}\cdot \varsigma_z < 0}.
\end{align*}
For $z\in Z$ fixed, $\expecf{Y, \varsigma|\Xspl}\ind{w_{n, Z}\cdot \varsigma_z < 0}$ is the probability of error 
of a classifier (which outputs $\text{sign}(w_{n, z})$) for the following prediction task. Let $x_{(1)}, x_{(2)}, \ldots x_{(m)}$ 
denote the values of $X$ falling in $B(z, \tau r_n)$ where $f_z$ is non zero. Then 
$$ (Y_{(1)}, \ldots Y_{(m)}) = \varsigma_z(f_z(x_{(1)}), \ldots, f_z(x_{(m)})) + \mathcal{N}(0, I_m)$$ 
is a random vector sampled from the equal-weight mixture of two spherical Gaussians in $\real^m$ centered at 
$u \doteq (f_z(x_{(1)}), \ldots, f_z(x_{(m)}))$ and $-u$. The prediction task is that of identifying the right mixture component
from the single sample $(Y_{(1)}, \ldots Y_{(m)})$. The smallest possible error for this task is that of the Bayes classifier and is well 
known to be $\Phi(-\norm{u})\geq \Phi\paren{-\sqrt{\sum_{i=1}^n f_z^2(X_i)}}$. Since 
$\Phi(-\sqrt{\cdot})$ is convex, we can apply Jensen's inequality as follows.
\begin{align*}
\sup_{\mathcal{D}} \expecf{\Xspl, \Yspl}\norm{f_n - f_\varsigma}^2 
&\geq \frac{C_3^2 }{n}\sum_{z\in Z}\expecf{\Xspl}\Phi\paren{-\sqrt{\sum_{i=1}^n f_z^2(X_i)}}
\geq \frac{C_3^2 }{n}\sum_{z\in Z}\Phi\paren{-\sqrt{\sum_{i=1}^n \expecf{\Xspl} f_z^2(X_i)}}\\
&= \frac{C_3^2}{n}\sum_{z\in Z}\Phi\paren{-\sqrt{n \norm{f_z}^2}}.
\end{align*}
The norm $\norm{f_z}$ is at most $C_3'/\sqrt{n}$ since $f_z\leq\lambda\tau r_n$ everywhere, and 
non zero only on the ball $B(z, \tau r_n)$ which has mass at most $C_1^{-1}(\tau r_n)^{d}$.
Finally, remember that by Lemma \ref{lem:rnetsize}, $\abs{Z}\geq C_1' r_n^{-d}$. Using these two facts we have
\begin{align*}
\sup_{\mathcal{D}} \frac{1}{\lambda^2 r_n^2}\expecf{\Xspl, \Yspl}\norm{f_n - f_\varsigma}^2
\geq \frac{C_3^2 }{ \lambda^2 r_n^2 n}\sum_{z\in Z}\Phi(-C_3')
\geq C_3^2 C_1'\tau^d\Phi(-C_3'),
\end{align*}
which concludes the proof since $f_n$ is arbitrarily chosen.
\end{proof}

\end{document}